\newtheorem{proposition}{Proposition}
\theoremstyle{definition}
\newtheorem{definition}{Definition}
\newtheorem{problem}{Problem}
\tikzstyle{block} = [rectangle, rounded corners, minimum width=3cm, minimum height=1cm,text centered, draw=black, fill=red!30]
\tikzstyle{new} = [rectangle, rounded corners, minimum width=1cm, minimum
\tikzstyle{arrow} = [thick,->,>=stealth]
\tikzstyle{fblock} = [rectangle, draw, fill=gray!20, 
\tikzstyle{line} = [draw, -latex']
\useunder{\uline}{\ul}{}
\crefname{problem}{Problem}{Problems}
\crefname{example}{Example}{Examples}
\crefname{section}{Sec.}{Secs.}
\Crefname{section}{Section}{Sections}
\Crefname{table}{Table}{Tables}
\crefname{table}{Table}{Tabs.}
\crefname{figure}{Fig.}{Figs.}
\crefname{algorithm}{Algorithm}{Algorithms}
\crefname{remark}{Remark}{Remarks}
\crefname{theorem}{Theorem}{Theorems}
\crefname{proposition}{Proposition}{Propositions}
\crefname{lemma}{Lemma}{Lemmas}
\crefname{corollary}{Corollary}{Corollaries}
\crefname{assumption}{Assumption}{Assumptions}
\crefname{definition}{Definition}{Definitions}
\newcommand{\Real}{\mathbb{R}}
\providecommand{\norm}[1]{\left\|#1\right\|}
\def \pinv {^\dagger}
\DeclareMathOperator{\dist}{\mathbf{d}}
\DeclareMathOperator{\Exp}{Exp}
\DeclareMathOperator{\Log}{Log}
\DeclareMathOperator{\se}{\mathfrak{se}}
\providecommand{\hatop}[1]{\left[#1\right]_\times}
\DeclareMathOperator{\SE}{SE}
\newcommand{\bmat}{\begin{bmatrix}}
\newcommand{\emat}{\end{bmatrix}}
\newcommand{\etal}{\emph{et~al.}\xspace}
\newcommand{\eg}{\emph{e.g.}\xspace}
\newcommand{\ie}{\emph{i.e.}\xspace}
\providecommand{\method}[1]{{\small \textsf{#1}}\xspace}
\newcommand{\Ecal}{\mathcal{E}}
\newcommand{\That}{\widehat{T}}
\newcommand{\lidar}{{LiDAR}\xspace}
\newcommand{\AlgName}{{MISO}\xspace}
\newcommand{\Point}{{Neural Points}\xspace}
\newcommand{\iSDF}{{iSDF}\xspace}
\newcommand{\mips}{{MIPS}\xspace}
\newcommand{\vfpp}{{VFPP}\xspace}
\newcommand{\cfeat}{c^\text{feat}}
\newcommand{\csdf}{c^\text{sdf}}
\newcommand{\Fastcamo}{FastCaMo-Large\xspace}
\newcommand{\myParagraph}[1]{{\bf #1.}\xspace}
\providecommand{\optional}[1]{{}}
\providecommand{\edit}[1]{{#1}}
\providecommand{\techreport}[1]{{}}  
\begin{document}

\title{\huge MISO: Multiresolution Submap Optimization for Efficient Globally Consistent Neural Implicit Reconstruction}

\author{\authorblockN{Yulun Tian,
Hanwen Cao,
Sunghwan Kim and
Nikolay Atanasov}
\authorblockA{Department of Electrical and Computer Engineering\\
University of California, San Diego\\
La Jolla, CA 92093, USA\\ 
Email: \{yut034,h1cao,suk063,natanasov\}@ucsd.edu}}

\maketitle
\begin{strip}\centering
\vspace{-1.5cm}
\includegraphics[trim=15 230 15 230, clip, width=\linewidth]{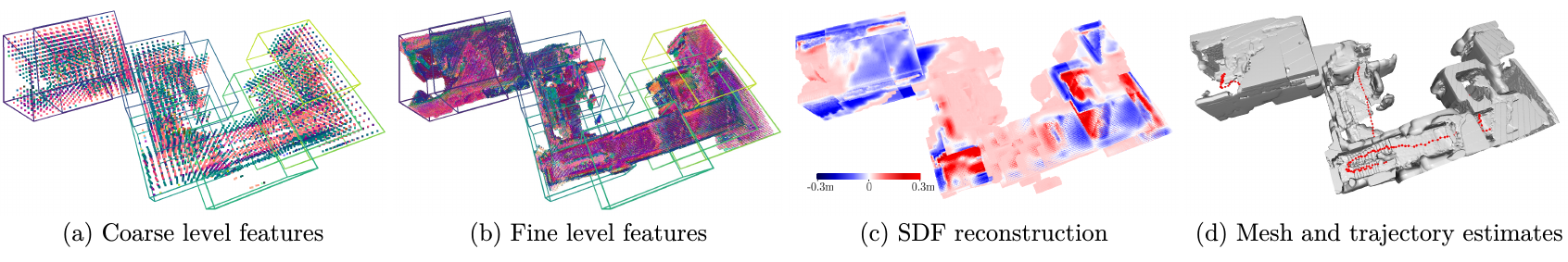}
\captionof{figure}{Demonstration of \AlgName on the \Fastcamo dataset \cite{tang2023mips}. 
    \AlgName leverages \emph{multiresolution submaps} that organize neural implicit features at different spatial resolutions (visualized in (a) and (b) using principal component analysis).
    By performing hierarchical optimization within (local) and across (global) submaps, \AlgName can efficiently and accurately reconstruct SDF (c) and estimate mesh and robot trajectory (d). For clarity, we only visualize the features and SDF values within $30$~cm of the surface. 
    \edit{The scene size is $26.0$~m $\times$ $16.7$~m $\times$ $7.5$~m.}
\label{fig:demo}}
\end{strip}

\begin{abstract}
Neural implicit representations have had a significant impact on simultaneous localization and mapping (SLAM) by enabling robots to build continuous, differentiable, and high-fidelity 3D maps from sensor data.
However, as the scale and complexity of the environment increase, neural SLAM approaches face renewed challenges in the back-end optimization process to keep up with runtime requirements and maintain global consistency.
We introduce \AlgName, a {hierarchical} optimization approach that leverages \emph{multiresolution submaps} to achieve efficient and scalable neural implicit reconstruction.
For local SLAM within each submap, we develop a hierarchical optimization scheme with learned initialization that substantially reduces the time needed to optimize the implicit submap features.
To correct estimation drift globally, we develop a hierarchical method to align and fuse the multiresolution submaps, leading to substantial acceleration by avoiding the need to decode the full scene geometry. 
\AlgName significantly improves computational efficiency and estimation accuracy of neural signed distance function (SDF) SLAM on large-scale real-world benchmarks.
\end{abstract}

\IEEEpeerreviewmaketitle

\section{Introduction}

In recent years, \emph{neural fields} \cite{xie2022neural} have emerged as a new frontier for scene representation in simultaneous localization and mapping (SLAM).
Compared to conventional approaches based on hand-crafted features or volumetric representations, neural SLAM \cite{tosi2024nerfs} offers advantages including continuous and differentiable scene modeling, improved memory efficiency, and better handling of measurement noise.
However, a crucial limitation remains in the back-end optimization of neural SLAM: as the environment size and mission time grow, most existing approaches consider increasingly larger optimization problems that ultimately limit their real-time performance.

A powerful idea to achieve more efficient SLAM is to depend on a \emph{hierarchical} representation that explicitly disentangles coarse and fine information of the environment.
Equipped with such a hierarchical model, a robot can perform inference over varying spatial resolutions, \eg, by first capturing the core structure in the environment and then optimizing the fine details later.
In SLAM, this idea dates back to several seminal works such as \cite{estrada2005hierarchical,frese2005multilevel,grisetti2010hierarchical}.
Recently, hierarchical or multiresolution representations have also achieved success in neural fields \cite{takikawa2021nglod,sun2022direct,muller2022instant}, demonstrating state-of-the-art performance and cost-quality trade-offs in many computer vision tasks.
Nevertheless, neural SLAM systems have yet to benefit from these recent advances, as the majority of back-end solvers do not fully utilize the hierarchical form of the representations.

In this work, we develop a \emph{hierarchical optimization} approach that directly uses multiresolution implicit features for neural SLAM.
This approach enables us to solve a significant portion of the back-end optimization in the \emph{implicit feature space}, and thus obtain substantial gains in efficiency and robustness compared to existing methods that depend on geometric reconstruction \cite{tang2023mips,zhai2024vox}. 
To scale to larger scenes, we adopt a submap-based design that models the environment as a collection of local neural implicit maps.
In this context, we show that the proposed hierarchical optimization significantly enhances both \emph{local submap optimization} and \emph{global submap fusion} stages in SLAM.
We apply our approach to neural signed distance function (SDF) SLAM \cite{ortiz2022isdf} and demonstrate its effectiveness on real-world, large-scale datasets.

\textbf{Contributions. }
We present \AlgName (\underline{M}ult\underline{I}resolution \underline{S}ubmap \underline{O}ptimization), a hierarchical optimization approach for neural implicit SLAM.
\edit{\AlgName performs local pose and submap optimization and global submap fusion, which can be used to achieve SDF SLAM from depth images or \lidar scans.}
For \emph{local} submap optimization, \AlgName introduces a learning-based hierarchical initialization method to generate multiresolution submap features, which are subsequently refined through joint optimization with robot poses.
As a theoretical motivation, we derive a closed-form solution to the initialization problem under the special case of linear least squares optimization. 
Leveraging this theoretical insight, we design hierarchical encoders to learn effective initializations in the general case.
For \emph{global} submap fusion, \AlgName presents a hierarchical optimization method to align and fuse submaps in the global frame.
Compared to previous works, our approach achieves faster and more robust performance by directly using information stored in the hierarchical implicit features rather than relying on geometric reconstruction. 
\edit{Evaluation on benchmark datasets shows that \AlgName achieves competitive estimation quality and significantly outperforms existing methods in computational efficiency.}
\Cref{fig:demo} demonstrates \AlgName on the real-world \Fastcamo dataset \cite{tang2023mips}.

\textbf{Notation. }
Unless stated otherwise, lowercase and uppercase letters denote vectors and matrices, respectively.
We define $[n] \triangleq \{1, 2, \hdots, n\}$ as the set of positive integers from 1 to $n$.
The special Euclidean group in 3D is denoted by $\SE(3)$, and $\SE(3)^n$ denotes its product manifold.
A local perturbation on the tangent space of $\SE(3)$ is represented by a vector $\varepsilon \in \Real^6$.
The exponential map $\Exp: \Real^6 \to \SE(3)$ is given by $\Exp(\varepsilon) = \exp(\hatop{\varepsilon})$,  where $\hatop{\varepsilon} \in \se(3)$ is the Lie algebra element corresponding to $\varepsilon$ and $\exp$ is the standard matrix exponential.
The inverse of the exponential map is denoted as $\Log: \SE(3) \to \Real^6$.
Given $T = (R, t) \in \SE(3)$ and a point $x \in \Real^3$, $Tx = Rx+t \in \Real^3$ denotes the transformed point.
\section{Related Work}
\label{sec:related_work}

We review related work on neural implicit representations for SLAM, with particular focus on neural SDF reconstruction and submap decompositions.
The reader is referred to recent surveys \cite{tosi2024nerfs,xie2022neural} for further discussions and reviews of alternative representations including 3D Gaussian splatting \cite{kerbl2023gaussian}.

\subsection{Neural Implicit Representations}

Neural implicit representations offer continuous and differentiable modeling of 3D scenes with high fidelity, memory efficiency, and robustness to noise \cite{park2019deepsdf, mescheder2019occupancy, mildenhall2021nerf, azinovic2022neural}. Early methods such as DeepSDF \cite{park2019deepsdf} and NeRF \cite{mildenhall2021nerf} rely solely on 3D coordinates and a single multi-layer perceptron (MLP) to reconstruct the scene. However, this approach is insufficient for capturing larger scenes or complex details, prompting subsequent works to introduce hybrid methods that combine MLP decoders with additional implicit features.
The implicit features are commonly organized in a 3D grid \cite{fridovich2022plenoxels, sun2022direct,takikawa2021nglod,muller2022instant}.
To enable continuous scene modeling, trilinear interpolation is used to infer a feature at an arbitrary query location that is subsequently passed through the MLP decoder to predict the environment model (\eg, occupancy, distance, radiance).
Recent works propose several alternative approaches to improve the memory efficiency over 3D feature grids.
K-Planes \cite{fridovich2023k} factorizes the scene representation into multiple 2D feature planes rather than using a full 3D voxel grid. Similarly, TensoRF~\cite{chen2022tensorf} employs tensor decomposition to compactly represent radiance fields. 
PointNeRF~\cite{xu2022point} constructs the scene representation directly from point clouds by efficiently aggregating local features at surface points.

Hierarchical strategies for organizing the implicit features have been particularly effective at capturing different levels of detail while maintaining efficiency \cite{muller2022instant, sun2022direct, li2023neuralangelo, yu2021plenoctrees, takikawa2021nglod}. 
DVGO~\cite{sun2022direct} performs progressive scaling that gradually increases the feature grid resolution during training.
InstantNGP \cite{muller2022instant} significantly accelerates feature grid training and inference by introducing a multiresolution hash encoding scheme.
Neuralangelo \cite{li2023neuralangelo} extends this concept with a coarse-to-fine optimization scheme that preserves fine-grained details.  
In parallel, octree-based frameworks provide an adaptive representations for large scenes by varying spatial resolution where needed \cite{takikawa2021nglod, yu2021plenoctrees}. 
\edit{H$_2$-Mapping~\cite{jiang2023h2} achieves incremental neural SDF mapping by combining octree-based coarse SDF and multiresolution feature grids, where the latter is optimized to learn residual geometry.} 
Hierarchical representations have also been explored for fast RGB-D surface reconstruction \cite{wang2022go, azinovic2022neural}. 
In this work, we leverage these hierarchical neural representations to achieve efficient and accurate back-end optimization for neural SLAM.
 
\subsection{Neural SDF SLAM}

Recent SLAM systems have achieved remarkable progress by modeling the environment using neural implicit SDF.
Building on DeepSDF~\cite{park2019deepsdf},
iSDF \cite{ortiz2022isdf} uses a single MLP for online SDF reconstruction from streaming RGB-D data. iSDF selects keyframes based on information gain \cite{sucar2021imap} and samples free-space and near-surface points along camera rays to train the MLP.
VoxFusion~\cite{yang2022vox} leverages a sparse octree to organize implicit features and Morton coding for efficient allocation and retrieval, enabling real-time SLAM in dynamically expanding environments. 
Vox-Fusion++~\cite{zhai2024vox} extends the method to large-scale scenes through submap support.
NICER-SLAM~\cite{zhu2024nicer} transforms estimated SDF to density for volume rendering during monocular SLAM.
NeRF-LOAM~\cite{deng2023nerf} similarly uses SDF to represent the geometry for neural lidar odometry and mapping, and develops a dynamic voxel embeddings generation method to speed up octree queries. 
PIN-SLAM~\cite{pan2024pin} reconstructs SDF via sparse neural point features, and employs voxel hashing to speed up spatial querying for online SLAM.
\edit{PINGS~\cite{pan2025pings} is a concurrent work that extends PIN-SLAM to enable photorealistic rendering via Gaussian Splatting. The neural point features are decoded to spawn Gaussian primitives locally, and trained with both SDF- and GS-based losses to enhance geometric consistency.}
ESLAM~\cite{johari2023eslam} uses multi-scale axis-aligned tri-plane feature grids with a TSDF representation to achieve memory-efficient reconstruction and localization. 
Co-SLAM~\cite{wang2023co} combines smooth one-blob coordinate encoding with local-detail hash-grid embeddings to improve camera tracking.
GO-SLAM~\cite{zhang2023go} supports loop closing and online bundle adjustment with a multi-resolution hash-grid design for both SDF and color.
Despite these advancements, achieving globally consistent SDF reconstruction of large-scale scenes remains challenging. 
In this work, we address this limitation by developing a hierarchical approach for both local and global multiresolution submap optimization.

\begin{figure*}[t]
    \centering
    \begin{subfigure}[t]{0.56\linewidth}
        \centering
        \includegraphics[width=\linewidth]{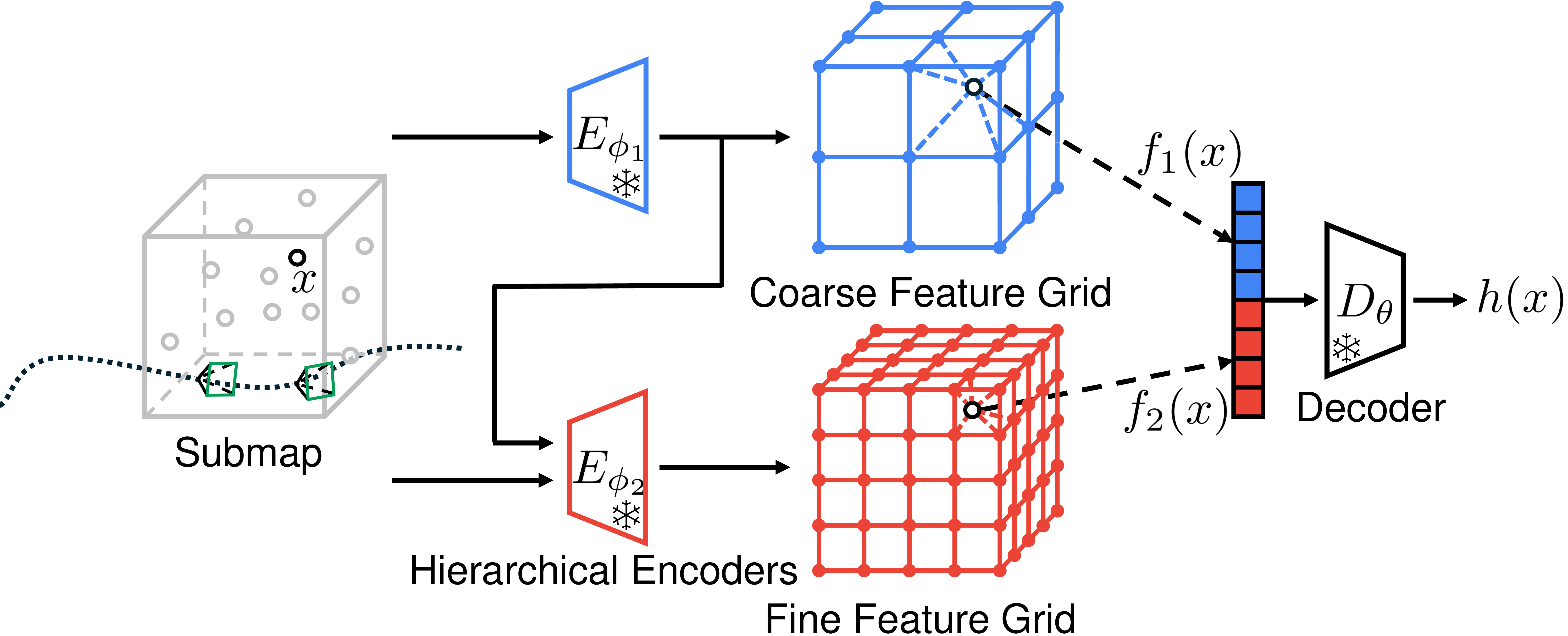}
        \caption{Local mapping with multiresolution feature grid}
        \label{fig:overview:local}
    \end{subfigure}%
    \hspace{1cm}%
    \begin{subfigure}[t]{0.38\linewidth}
        \centering
         \includegraphics[width=\linewidth]{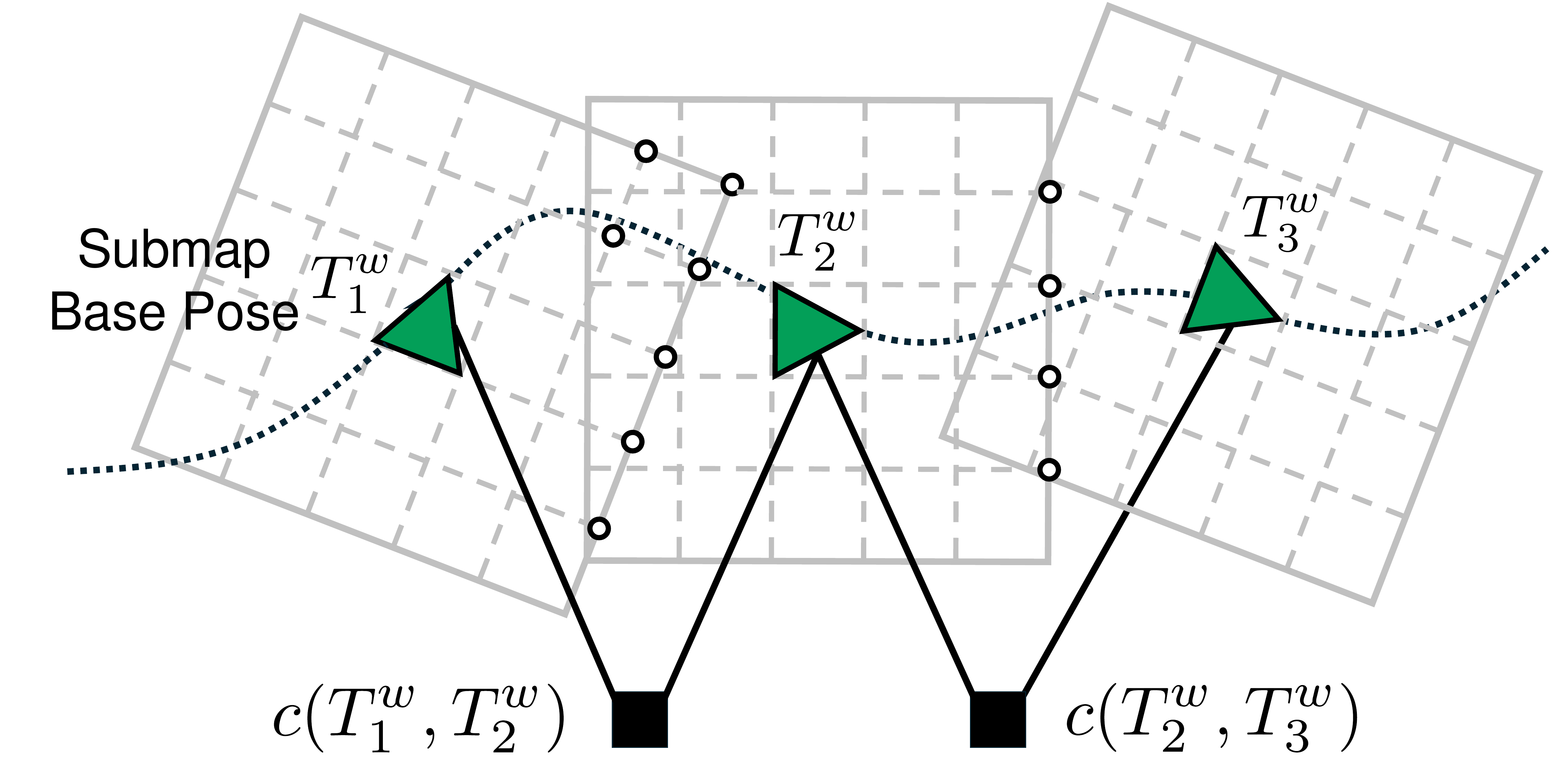}
        \caption{Global submap alignment and fusion}
        \label{fig:overview:global}
    \end{subfigure}
    \caption{\textbf{Overview of \AlgName.} 
    (a) Given point cloud observations, \AlgName performs local hierarchical SLAM within a submap represented as a multiresolution feature grid (\cref{sec:local_optimization}).
    (b) 
    Given locally optimized submaps, \AlgName performs global alignment and fusion across submaps
    to eliminate estimation drift and achieve globally consistent scene reconstruction (\cref{sec:global_optimization}).
    }
    \label{fig:overview}
\end{figure*}

\subsection{Submap-based Neural SLAM}

An effective strategy for large-scale 3D reconstruction is to partition the scene into multiple submaps. K\"ahler \etal~\cite{kahler2016real} create submaps storing truncated SDF values based on visibility criteria and align them by optimizing the relative poses of overlapping keyframes.
MIPS-Fusion~\cite{tang2023mips} extends this idea by incrementally generating MLP-based submaps based on the camera’s field of view and aligning them via point-to-plane refinement. 
Vox-Fusion++~\cite{zhai2024vox} adopts a dynamic octree structure for each submap and performs joint camera tracking and submap alignment by optimizing a differentiable rendering loss. 
Loopy-SLAM~\cite{liso2024loopy} uses a neural-point-based approach, creating submaps upon large camera rotations and later constructing a pose graph with iterative closest point (ICP) to detect loop closures. More recently, PLGSLAM~\cite{deng2024plgslam} combines axis-aligned tri-planes for high-frequency features with an MLP for low-frequency components, enabling multiple local representations to be merged efficiently.  NEWTON~\cite{matsuki2024newton} employs a spherical coordinate system to create local maps that accommodate flexible boundary adjustments. 
Multiple-SLAM~\cite{liu2023efficient} and CP-SLAM \cite{hu2024cp} consider collaborative scenarios and fuse local neural implicit maps from multiple agents.
Although effective, existing methods require reconstructing the scene’s geometry to align submaps, which can be costly and inaccurate in real-world settings. In contrast, our method aligns submaps directly in the feature space via hierarchical optimization, providing both fast and robust performance without explicit geometric reconstruction.


\section{Overview}
In \AlgName, we represent the scene as a collection of posed submaps.
Correspondingly, the back-end optimization involves two types of problems: 
(i) local SLAM within each submap and (ii) global alignment and fusion across all submaps. See \cref{fig:overview} for an illustration.

Given odometry and point-cloud observations \edit{from depth images or \lidar scans}, a robot aims to estimate its trajectory and build a local map represented as a multiresolution feature grid (\cref{fig:overview:local}). 
Organizing implicit features into a \emph{hierarchy of grids} effectively disentangles information at different spatial resolutions.
At inference time, interpolated features from different hierarchy levels are aggregated and processed by a \emph{decoder network} to predict the scene geometry.
To speed up local optimization, we introduce \emph{hierarchical encoder networks} to initialize the grid features at each hierarchy level directly from input observations. To achieve further acceleration and enable generalization to new environments, both the encoder and decoder networks are pre-trained offline over multiple scenes and fixed during online SLAM. 
\cref{sec:local_optimization} presents in detail our local SLAM method.

In large environments or over long time durations, the robot trajectory estimates will inevitably drift and cause the submaps to be misaligned.
To address this challenge, \AlgName introduces an approach to align and fuse all submaps in the global reference frame (\cref{fig:overview:global}). 
Each submap is associated with a \emph{base pose} that determines the transformation from the local (submap) frame to the global frame.
Compared to existing approaches, which rely on decoding the scene geometry into an explicit representation like occupancy, mesh, or distance field, \AlgName performs alignment and fusion directly using the implicit features in the multiresolution submaps.
We show that this results in significantly faster optimization and outperforms other methods under large initial alignment errors.
\cref{sec:global_optimization} presents the details of the global alignment and fusion method.

\section{Local SLAM}
\label{sec:local_optimization}

This section introduces our submap representation utilizing  multiresolution feature grids and our hierarchical submap optimization method.

\subsection{Local SLAM with Multiresolution Feature Grid}
\label{sec:mapping_with_multiresoluion_feature_grid}

We represent each local submap as a multiresolution feature grid \cite{takikawa2021nglod,sun2022direct,muller2022instant}, defined formally below.

\begin{definition}[Multiresolution Feature Grid]
\label{def:grid}
A \emph{multiresolution feature grid} contains $L>1$ levels of regular grids with increasing spatial resolution ordered from coarse ($l=1$) to fine ($l=L$).
At each level $l$, each vertex located at $z_{l,i} \in \Real^3$ stores a learnable feature vector $f_{l,i} \in \Real^d$.
Together with a kernel function $k_l: \Real^3 \times \Real^3 \to \Real$, the feature grid defines a continuous feature field $f_l(x)= \sum_{i \in I_l} k_l(x, z_{l,i}) f_{l,i}$, 
where $x \in \Real^3$ is any query position, and $I_l$ indexes over all vertices at level $l$.
To obtain a scalar output (\eg, signed distance or occupancy) at query position $x$, the features at different levels are concatenated (denoted by $\bigoplus$) and processed by a decoder network $D_\theta$,
\begin{equation} \label{eq:multiresolution_grid}
h(x; F, \theta) = D_\theta \bigl(
	\bigoplus_{l \in [L]}
	f_l(x)
\bigr).
\end{equation}
The model has the set of features from all levels $F$ and the decoder parameters $\theta$ as learnable parameters.
\end{definition}

In this work, we implement the kernel functions $k_l$ using trilinear interpolation.  While the multiresolution feature grid offers a powerful representation, directly using it as a map representation in SLAM presents a computational challenge due to the need to train the decoder network $D_\theta$. Even if computation is not a concern, training the decoder with a small dataset or during a single SLAM session may lead to unreliable generalization or catastrophic forgetting \cite{tosi2024nerfs}. 

To address these challenges, we pre-train the decoder $D_\theta$ offline over multiple scenes, similar to prior works (\eg, \cite{zhu2022nice, pan2024pin}). The details of the offline decoder training are presented in Appendix~\ref{sec:grid_details}. During online SLAM, the decoder weights are fixed (as shown in \cref{fig:overview:local}), and the robot only needs to optimize the grid features $F$ and its own trajectory.
Specifically, 
within each submap $s$, we are given noisy pose estimates $\{\hat{T}_k^s\}_k$ (\eg, from odometry) and associated observations $\{X^k\}_k$, where each $X^k = \{x^k_1,\ldots,x^k_{m_k}\} \subset \Real^3$ is a point cloud observed at pose $k$.
Using this information, we seek to jointly refine the robot's pose estimates and the submap features $F$ via the following optimization problem.

\begin{problem}[Local SLAM within a submap] \label{prob:local_mapping}
Given $n$ initial pose estimates $\{\hat{T}_k^s\}_k$ in the reference frame of submap $s$ and associated point-cloud observations $\{X^k\}_k$ in the sensor frame, the local SLAM problem is defined as,
\begin{equation}
\label{eq:local_mapping}
\underset{F, \{T_k^s\}_k \subset \SE(3)}{\min}
	 \!
	 \sum_{k=1}^n \sum_{j=1}^{m_k} 
        c_j\bigl(
            h(T^s_k x^k_j; F)
        \bigr) 
        + \!
         \sum_{k=1}^n \rho(\That^s_k, T^s_k),
\end{equation}
where $c_j: \Real \to \Real$ is a cost function associated with the $j$-th observation and $\rho: \SE(3) \times \SE(3) \to \Real $ is a pose regularization term. We drop the dependence of the model $h$ on the decoder parameters $\theta$ to reflect that the decoder is trained offline. 
\end{problem}

The first group of terms in \eqref{eq:local_mapping} evaluates the environment reconstruction at observed points $x_j^k$ by transforming them to the submap frame (\ie, $x^s_j = T^s_k x^k_j$) and querying the feature grid model $h$. Empirical results show that introducing the second group of pose regularization terms helps the optimization remain robust against noisy or insufficient observations.
We use regularization inspired by trust-region methods \cite[Ch.~4]{nocedal1999numerical},
\begin{equation}
\rho(\That, T) = w^\rho \max \bigl( \bigl \| \Log(\That^{-1}T) \bigr \|_2 - \tau, 0 \bigr ),
\label{eq:trust_region}
\end{equation}
which penalizes pose updates larger than the trust-region radius $\tau$, and $w^\rho$ is a weight parameter (default to $10^3$).

In our implementation, we solve \cref{prob:local_mapping} approximately by parametrizing each pose variable locally as $T^s_k =  \That^s_k \Exp(\varepsilon^s_k)$ where $\varepsilon^s_k \in \Real^6$ is the local pose correction.
Both the grid features $F$ and the correction terms $\{\varepsilon^s_k\}_k $ are optimized using Adam \cite{kingma2014adam} in PyTorch \cite{paszke2019pytorch}.

We introduce definitions of the cost $c_j$ specific to neural SDF reconstruction next. Whenever clear from context, to ease the notation we use $x_j \equiv x^s_j = T^s_k x^k_j$ to represent a point in the submap frame.

\myParagraph{Cost functions for neural SDF reconstruction}
We follow iSDF \cite{ortiz2022isdf} to design measurement costs for SDF reconstruction. Specifically, we classify all observed points
as either (i) on or near surface (default to $30$~cm as in iSDF), or (ii) in free space. For on or near surface observations, the cost function $c_j = c_j^{\text{sdf}}$ is based on direct SDF supervision,
\begin{equation}
	c_j^{\text{sdf}} \left(h(x_j)\right) = w_j^\text{sdf} \left |h(x_j) - y_j \right|,
	\label{eq:sdf_residual}
\end{equation}
where $w_j^\text{sdf}>0$ is measurement weight (default to $5.4$ as in iSDF) and $y_j \in \Real$ is a measured SDF value on or near surface obtained using the approach from iSDF \cite{ortiz2022isdf}.

For free-space observations, we use the cost to enforce bounds on the SDF values. Specifically, we follow iSDF to obtain lower and upper bounds $\underline{b}_j, \bar{b}_j$ on the SDF from sensor measurements, and define $c_j = c_j^{\text{bnd}}$ as,
\begin{align}
    c_j^\text{lo} \left(h(x_j) \right) &= \max (e^{\beta (\underline{b}_j - h(x_j))} - 1, \; 0), \\
    c_j^\text{up} \left(h(x_j) \right) &= \max (h(x_j) - \bar{b}_j, \; 0), \\
    c_j^{\text{bnd}} \left(h(x_j) \right) &= \max(c_j^\text{lo} \left(h(x_j) \right), c_j^\text{up} \left(h(x_j) \right) ).
    \label{eq:bnd_residual}
\end{align}
This cost applies exponential penalty ($\beta=5$ by default) for the lower bound and linear penalty for the upper bound.
This is because, in practice, violation of the lower bound is usually more critical, \eg, if $\underline{b}_j = 0$ and the model predicts negative SDF values. We do not include Eikonal regularization \cite{gropp2020implicit} because we observed that it has limited impact on accuracy while making the optimization slower.

\subsection{Hierarchical Feature Initialization for Local SLAM}
\label{sec:hierarchical_local_mapping}

In practice, the bulk of the computational cost in \cref{prob:local_mapping} is incurred by the optimization over the high-dimensional grid features $F$.
To address this challenge, we propose a method that leverages the structure of the multiresolution grid to learn to initialize $F$ from sensor observations. 
While prior works such as Neuralangelo~\cite{li2023neuralangelo} advocate for coarse-to-fine training strategies, a crucial gap remains since the features at each level are still optimized from scratch, \eg, from zero or random initialization.
Our key intuition is that, at any level, a much more effective initialization can be obtained by accounting for optimization results from the previous levels.

In the following, we use $F_l$ to denote the subset of latent features at level $l$,
and $F_{1:l}$ denote all latent features up to and including level $l$.
We consider the problem of initializing $F_l$ given fixed submap poses and coarser features $F_{1:l-1}$. 
This amounts to solving the following subproblem of \cref{prob:local_mapping},
\begin{equation}
\label{eq:level_local_mapping}
\underset{{F_l}}{\min}
	\quad \sum_{k=1}^n \sum_{j=1}^{m_k} 
		c_j \left(
		h(T^s_k x^k_j; F_{1:l-1}, F_l, 0_{l+1:L})
		\right),
\end{equation}
where we explicitly expand $F$ into the (known) coarser features $F_{1:l-1}$, 
the target feature to be initialized $F_l$, and finer features (assumed to be zero).
During initialization, we do not consider pose optimization and thus drop the trust-region regularization in \cref{prob:local_mapping}.

To develop our approach, we first present theoretical analysis and derive a closed-form solution to \eqref{eq:level_local_mapping} in a special linear-least-squares case. Leveraging insight from the closed-form solution in the linear case, we then develop a learning approach to initialize the grid features at each level, applicable to the general (nonlinear) problem in \eqref{eq:level_local_mapping}.

\myParagraph{Special case: linear least squares}
Consider the special case where the decoder $D_\theta$ in \cref{def:grid} is a linear function.
Further, assume that the cost function $c_j$ in \eqref{eq:level_local_mapping} is quadratic, \eg, $c_j(h(x_j)) =  (h(x_j) - y_j)^2$.
For instance, this would correspond to using squared norm for the SDF cost in \eqref{eq:sdf_residual}. Under these assumptions, problem \eqref{eq:level_local_mapping} is a linear least squares problem, for which we can obtain a closed-form solution from the normal equations, as shown next.

\begin{proposition}[Linear least squares]
\label{lem:linear_case}
With linear decoder $D_\theta$ and quadratic costs $c_j(h(x_j)) =  (h(x_j) - y_j)^2$,
the optimal solution to \eqref{eq:level_local_mapping} is:
\begin{equation}
    F_l^\star 
    = E(r_{1:l-1}(x))
    := - \left[
    J^\top J
    \right]\pinv
    J^\top r_{1:l-1}(x),
    \label{eq:level_local_mapping_closed_form}
\end{equation}
where $x = \{T^s_k x^k_j\}$ and $y = \{y_j\}$ collect all observed points and labels in two vectors,
$J = \partial h(x;F) / \partial F_l$ is the Jacobian matrix evaluated at $x$, 
and $r_{1:l-1}(x)$ are the residuals of prior levels, represented in vector form as,
\begin{equation}
	r_{1:l-1}(x) = h(x; F_{1:l-1}, 0_{l:L}) - y.
\end{equation}
Observe that the residual vector $r_{1:l-1}(x)$ is mapped to the least-squares solution $F_l^\star$ by a linear function, which we denote as $E(\cdot)$.
\end{proposition}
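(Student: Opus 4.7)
The plan is to exploit linearity in $F_l$ to recast \eqref{eq:level_local_mapping} as a standard linear least squares problem and then write down its minimum-norm solution via the normal equations.

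First I would verify that, under the stated assumptions, $h(x; F)$ is linear in $F_l$ for every fixed query $x$. By \cref{def:grid}, the level-$l$ feature field $f_l(x) = \sum_{i \in I_l} k_l(x, z_{l,i}) f_{l,i}$ is a linear combination of the entries of $F_l$ with coefficients determined by the (trilinear) kernel $k_l$. The contribution of $F_l$ then passes through the concatenation $\bigoplus_{l'} f_{l'}(x)$ (a linear map) and finally the decoder $D_\theta$, which is linear by hypothesis. Composing linear maps yields linearity of $F_l \mapsto h(x;F)$. Consequently, there exists a matrix $J$ such that for all $F_l$,
\begin{equation}
h(x; F_{1:l-1}, F_l, 0_{l+1:L}) = h(x; F_{1:l-1}, 0_{l:L}) + J F_l,
\end{equation}
and $J$ necessarily equals the Jacobian $\partial h(x;F)/\partial F_l$ appearing in the statement.

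Next I would stack all observations $x = \{T^s_k x^k_j\}$ and labels $y = \{y_j\}$ into vectors, with $J$ becoming the corresponding stacked Jacobian. Substituting the affine expression into the quadratic costs turns the objective in \eqref{eq:level_local_mapping} into
\begin{equation}
\sum_{k,j} \bigl( h(x_j; F_{1:l-1}, 0_{l:L}) + (JF_l)_j - y_j \bigr)^2 = \bigl\| r_{1:l-1}(x) + J F_l \bigr\|_2^2,
\end{equation}
using the definition $r_{1:l-1}(x) = h(x; F_{1:l-1}, 0_{l:L}) - y$.

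Finally, I would invoke the standard theory of linear least squares: any minimizer of $\|r_{1:l-1}(x) + J F_l\|_2^2$ satisfies the normal equation $J^\top J F_l = -J^\top r_{1:l-1}(x)$, and the unique minimum-norm solution is $F_l^\star = -[J^\top J]^\dagger J^\top r_{1:l-1}(x)$, which is the claimed formula. Linearity of the map $r_{1:l-1}(x) \mapsto F_l^\star$ is then immediate from the formula, justifying the notation $E(\cdot)$. There is no real obstacle here; the only subtlety worth flagging is that in typical grid settings $F_l$ is much higher-dimensional than the observation vector, so $J^\top J$ is generally rank-deficient and the Moore–Penrose pseudoinverse (rather than an ordinary inverse) is essential for selecting the minimum-norm solution.
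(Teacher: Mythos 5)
Your proposal is correct and follows essentially the same route as the paper: establish that $h$ is affine in $F_l$ (the paper does this by writing $f_l(x_j) = K_l(x_j)F_l$ and $h = \sum_l \theta_l^\top K_l(x_j)F_l$ explicitly, while you argue by composition of linear maps), stack the observations to obtain $\min_{F_l}\|r_{1:l-1}(x) + JF_l\|_2^2$, and solve the normal equations. Your added remark that $J^\top J$ is typically rank-deficient and that the pseudoinverse selects the minimum-norm solution is a point the paper leaves implicit, but it does not change the argument.
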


\begin{proof}
Please refer to Appendix~\ref{sec:proof}.
\end{proof}
\cref{lem:linear_case} reveals an interesting structure of the optimal initialization $F_l^\star$:
namely, it can be interpreted as a function of the prior levels' residuals $r_{1:l-1}(x)$.
We will build on this insight to approach the problem in the general case.

\begin{algorithm}[t]
	\caption{\small \textsc{Hierarchical Local SLAM}}
	\label{alg:hier_local_mapping}
	\begin{algorithmic}[1]
		\small 
		\Function{$\{T_k^s\}_k, F$ = HierarchicalLocalSLAM}{}
        \For{level $l = 1, 2, \hdots, L$}
            \State Initialize features at level $l$:
             $F_{l} \leftarrow E_{\phi_{l}}(r_{1:l-1}(x)). $
		\EndFor \label{alg:hier_local_mapping:init}
		\State From the initialized values, jointly update features $F$ and poses $\{T_k^s\}_k$ by minimizing \eqref{eq:local_mapping}.
            \State \Return $\{T_k^s\}_k$ and $F$.
		\EndFunction
	\end{algorithmic}
\end{algorithm}

\myParagraph{General case: learning hierarchical initialization}
We take inspiration from \cref{lem:linear_case} to develop a learning-based solution for the general case, where the decoder is nonlinear (\eg, an MLP) and the measurement costs are generic functions. 
Motivated by the previous insight, we propose to replace the linear mapping $E$ in \cref{lem:linear_case} with a neural network 
$E_{\phi_l}$ to approximate $F^\star_l$ from the residuals $r_{1:l-1}(x)$,
\begin{equation}\label{eq:encoder}
    F_l^\star \approx E_{\phi_l} (r_{1:l-1}(x)),
\end{equation}
where $\phi_l$ are the neural network parameters. We refer to $E_{\phi_l}$ as an \emph{encoder} due to its similarity to an encoding module used by prior works such as Convolutional Occupancy Networks \cite{peng2020convolutional} and Hierarchical Variational Autoencoder \cite{vahdat2020nvae}. 
In this work, we train a separate encoder $E_{\phi_l}$ to initialize the feature grid $F_l$ at each level $l$.
Given the learned encoder networks, we apply them to initialize the multiresolution feature grid progressively in a coarse-to-fine manner, 
before jointly optimizing all levels together with the robot trajectories, as shown in \cref{alg:hier_local_mapping}.

Next, we present the details of our encoder network for neural SDF reconstruction.
The input to the encoder is represented as a point cloud.
For each 3D position $x_j \in \Real^3$ in the submap frame, we use the following measurement residuals to construct an initial feature vector
$r_j^{\text{in}} = \begin{bmatrix}
r_{j,1}^{\text{in}} &  r_{j,2}^{\text{in}}  & r_{j,3}^{\text{in}} 
\end{bmatrix}^\top \in \Real^3$:
\begin{align}
	r_{j,1}^{\text{in}} &= \begin{cases}
	h(x_j) - y_j, & \text{if $x_j$ near surface,} \\
	0, & \text{otherwise,}
	\end{cases} 
	\label{eq:encoder_input_sdf} \\ 
	r_{j,2}^{\text{in}} &= \begin{cases}
	\max (h(x_j) - \bar{b}_j, 0), & \text{if $x_j$ in free space,} \\
	0, & \text{otherwise,}
	\end{cases}
	\label{eq:encoder_upper_bnd} \\
	r_{j,3}^{\text{in}} &= \begin{cases}
	\max (\underline{b}_j - h(x_j), 0), & \text{if $x_j$ in free space,} \\
	0, & \text{otherwise.}
	\label{eq:encoder_lower_bnd}
	\end{cases}
\end{align}
\begin{figure}[t]
    \centering
    \includegraphics[width=\linewidth]{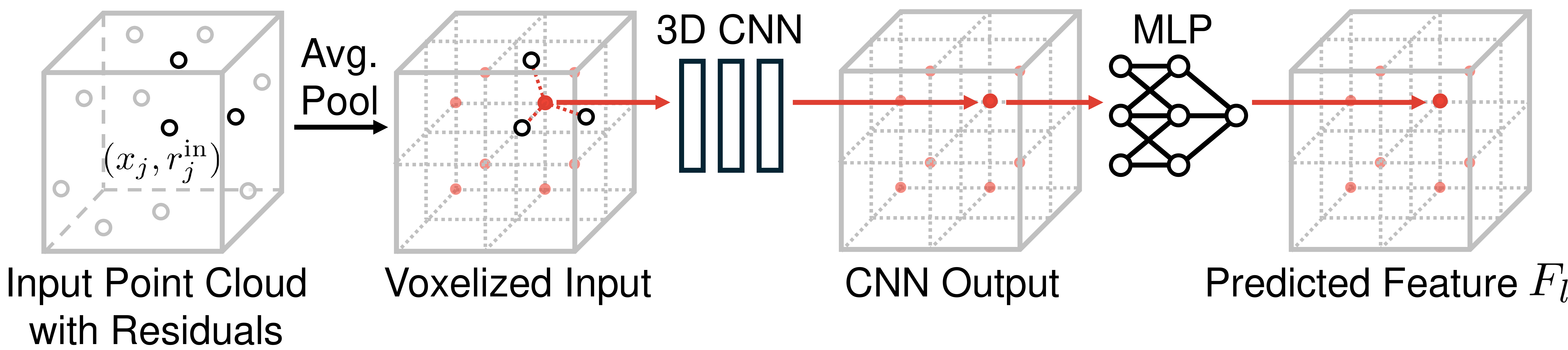}
    \caption{Illustration of the level-$l$ encoder $E_{\phi_l}$. Input point cloud with residuals $\{x_j, r_j^\text{in}\}_j$ is voxelized via averaging pooling and processed by a 3D CNN. The CNN outputs at all vertices are then transformed via a shared MLP to predict the target feature grid $F_l$. }
    \label{fig:encoder}
\end{figure}
The first feature $r^{\text{in}}_{j,1}$ corresponds to the SDF residual \eqref{eq:sdf_residual}.
The remaining two features correspond to the residuals of upper and lower bounds used to compute \eqref{eq:bnd_residual}.
The initial point features are pooled onto a 3D voxel grid with the same resolution as the target feature grid at level $l$.
Each 3D vertex stores the average residual features from points nearby.
A vertex's feature is set to be zero if there are no nearby points.
This voxelized input is then passed through a small 3D convolutional neural network (CNN).
Finally, the CNN outputs at all vertices are passed through a shared MLP to predict the target feature grid $F_l$.
\cref{fig:encoder} shows a conceptual illustration, and \cref{fig:encoder_io} in the experiments shows example inputs and predictions on a real-world dataset. 
Similar to the decoder, we train the encoders offline using submaps from multiple environments.
In particular, when training the level $l$ encoder $E_{\phi_l}$,
we use a training loss based on \eqref{eq:level_local_mapping},
\begin{equation}
\label{eq:encoder_training_loss}
\underset{{\phi_l}}{\min} 
	\sum_{s \in S} \sum_{j \in J_s} 
		c_j \left(
		h(x^s_j; F^s_{1:l-1}, E_{\phi_{l}}(r^s_{1:l-1}), 0_{l+1:L})
		\right),
\end{equation}
where $S$ contains the indices of all training submaps, $J_s$ contains the indices of all points in submap $s$, and $F^s$ denotes the features for submap $s$.
During training, we use noisy poses within the submaps to compute $x^s_j$ to account for the possible pose estimation errors at test time.

\edit{
\subsection{Extension to Incremental Processing}
\label{sec:incremental}
The local SLAM optimization formulated in \eqref{eq:local_mapping} can be performed in an incremental manner.
We describe an implementation inspired by PIN-SLAM~\cite{pan2024pin} and present corresponding evaluation on outdoor datasets in \cref{sec:experiments:newer_college}.
At each time step $k$, the received depth image or \lidar scan introduces new cost terms $\{c_j\}_{j=1}^{m_k}$ in \eqref{eq:local_mapping}.
We then alternate between tracking and mapping to update the estimated robot pose and the neural implicit submap.
During tracking, we only optimize the current robot pose $T^s_k$ and keep the submap features $F$ fixed.
Specifically, we only use surface observations and further perform voxel downsampling (voxel size $0.6$~m in \cref{sec:experiments:newer_college}).
We estimate the robot motion by minimizing \eqref{eq:local_mapping} with respect to $T^s_k$, where the pose regularization term is disabled.
Following PIN-SLAM, a Geman-McClure robust kernel is also applied to improve robustness against outlier measurements.
During mapping, we fix all pose estimates $T_{1:k}^s$ and only optimize the submap features $F$.
Voxel downsampling is similarly applied but with a smaller voxel size ($0.08$~m in \cref{sec:experiments:newer_college}).
In the incremental setting, the encoder initialization is disabled. Instead, we use the latest frame and 10 evenly spaced historical frames to update the submap features $F$ by minimizing \eqref{eq:local_mapping}.}

\section{Global Submap Alignment and Fusion}
\label{sec:global_optimization}

As the robot navigates in a large environment or for an extended time, its onboard pose estimation will inevitably drift.
To achieve globally consistent 3D reconstruction (\eg, after loop closures), it is imperative to accurately align and fuse the submaps in the global frame.
Many state-of-the-art systems, such as MIPS-Fusion \cite{tang2023mips} and Vox-Fusion++ \cite{zhai2024vox}, employ approaches that align submaps using learned SDF values.
However, this is computationally expensive and susceptible to noise.
In this section, we address this limitation by developing a hierarchical method for submap alignment and fusion, which attains significant speed-up by directly performing optimization using the features from the multiresolution submaps.

\myParagraph{Hierarchical Submap Alignment}
Consider the problem of aligning a collection of $n_s$ submaps, each represented as a multiresolution feature grid from \cref{sec:local_optimization}.
For each submap $u \in [n_s]$, we aim to optimize the submap base pose in the world frame, denoted as $T^w_u \in \SE(3)$.
The key intuition for our approach is that, for any pair of submaps to be well aligned, their implicit feature fields should also be aligned in the global frame.
We present our \emph{hierarchical} and \emph{correspondence-free} approach to exploit this intuition. 
Our method performs alignment by progressively including features at finer levels.
In the following, let $f^u_l(x) \in \Real^d$ denote the level-$l$ interpolated feature at query position $x$ in submap $u$.
Furthermore, let $f^u_{1:l}(x)$ denote the result after concatenating features up to and including level $l$, \ie, $f^u_{1:l}(x) = \bigoplus_{l'=1}^{l} f^u_{l'}(x) \in \Real^{ld}$.

Consider a pair of overlapping submaps $u,v \in [n_s]$.
Let $\{z^u_{l,i}\} \subset \Real^3$ denote the vertex positions at level $l$ in submap $u$.
Using these vertices, we define the following pairwise cost to align features,
\begin{equation}
\cfeat_l(T^w_u, T^w_v) = \sum_{i \in I^{uv}_l} 
\dist 
\bigl(
	f^u_{1:l}(z^u_{l,i}), f^v_{1:l}((T^w_v)^{-1} (T^w_u) z^u_{l,i})
\bigr).
\label{eq:latent_pairwise_cost}
\end{equation}
In \eqref{eq:latent_pairwise_cost}, $I^{uv}_l$ denotes the indices of level-$l$ vertices in submap $u$ that lie within the overlapping region of the two submaps. 
Intuitively, the right-hand side of \eqref{eq:latent_pairwise_cost} compares feature vectors interpolated from the two submaps.
The first feature comes from the source grid $u$ evaluated at its grid vertex position $z^u_{l,i}$.
To evaluate the corresponding feature in the target grid $v$, we use the submap base poses to transform the vertex position, \ie, $z^v_{l,i} = (T^w_v)^{-1} (T^w_u) z^u_{l,i}$ before querying the target feature grid.
Finally, $\dist$ denotes a distance metric in the space of implicit features.
In our implementation, we use the L2 distance, \ie, $\dist(f, f') = \norm{f - f'}^2_2$ as we find it works well empirically.
\cref{sec:experiments:ablations} presents an ablation study on alternative choices of $\dist$.

Given the pairwise alignment costs defined in \eqref{eq:latent_pairwise_cost}, 
\AlgName performs joint submap alignment by formulating and solving a problem similar to pose graph optimization.
Let $\Ecal$ denote the set of submap pairs with overlapping regions.
Then, we jointly optimize all submap poses $\{T^w_u\}_u \subset \SE(3)$
as follows.

\begin{problem}[Level-$l$ submap alignment]
\label{prob:feature_alignment}
Given $n_s$ submaps with current base pose estimates $\{\That^w_u\}_u$, 
solve for updated submap base poses via,
\begin{equation}
\underset{\{T^w_u\}_u \subset \SE(3)}{\min}
	\quad \sum_{(u,v) \in \Ecal} \cfeat_l(T^w_u, T^w_v)
	+ 
	\sum_{u=1}^{n_s} \rho(\That^w_u, T^w_u),
\label{eq:latent_pgo}
\end{equation}
where $\rho$ is the trust-region regularization defined in \eqref{eq:trust_region}.
\end{problem}

\begin{algorithm}[t]
\caption{\small \textsc{Hierarchical Submap Alignment}}
\label{alg:hier_alignment}
\begin{algorithmic}[1]
    \small 
    \Function{$\{T^w_u\}_u$ = SubmapAlignment}{}
    \State Initialize submap poses $\{T^w_u\}_u$.
    \For{level $l = 1, 2, \hdots, L$} \label{alg:hier_align:feat_start}
        \State Update $\{T^w_u\}_u$ by solving \eqref{eq:latent_pgo}
        at level $l$ for $k_{f,l}$ iters.
    \EndFor \label{alg:hier_align:feat_end}
    \State Update $\{T^w_u\}_u$ by solving \eqref{eq:sdf_pgo} for $k_s$ iters. \label{alg:hier_align:sdf}
    \State \Return $\{T^w_u\}_u$.
    \EndFunction
\end{algorithmic}
\end{algorithm}

Similar to local SLAM,  we solve \eqref{eq:latent_pgo} using PyTorch \cite{paszke2019pytorch} where the poses are updated by optimizing local corrections (represented in exponential coordinates) to the initial pose estimates $\{\That^w_u\}_u$. 
Our formulation naturally leads to a sequence of alignment problems that include features at increasingly fine levels.
We propose to solve these problems sequentially, using solutions from level $l$ as the initialization for level $l+1$; see lines~\ref{alg:hier_align:feat_start}-\ref{alg:hier_align:feat_end} in \cref{alg:hier_alignment}.

\begin{table*}[t]
\centering
\renewcommand{\arraystretch}{1.3}
\caption{Evaluation of local mapping quality for different methods on ScanNet \cite{dai2017scannet}.
\AlgName is optimized for 20 epochs and the baselines \iSDF and \Point are optimized for 100 epochs.
For each scene, we report optimization time (sec), Chamfer-L1 error (cm), and F-score (\%) computed using a threshold of $5$ cm. 
Best and second-best results are highlighted in \textbf{bold} and \underline{underline}, respectively.}
\label{tab:scannet_mae}
\resizebox{\textwidth}{!}{%
\begin{tabular}{|l|rrr|rrr|rrr|rrr|}
\hline
\multicolumn{1}{|c|}{Scene}  & \multicolumn{3}{c|}{0000}                                                                                              & \multicolumn{3}{c|}{0011}                                                                                              & \multicolumn{3}{c|}{0024}                                                                                              & \multicolumn{3}{c|}{0207}                                                                                              \\
\multicolumn{1}{|c|}{Method} & \multicolumn{1}{c}{Time$\downarrow$} & \multicolumn{1}{c}{C-l1 $\downarrow$} & \multicolumn{1}{c|}{F-score $\uparrow$} & \multicolumn{1}{c}{Time$\downarrow$} & \multicolumn{1}{c}{C-l1 $\downarrow$} & \multicolumn{1}{c|}{F-score $\uparrow$} & \multicolumn{1}{c}{Time$\downarrow$} & \multicolumn{1}{c}{C-l1 $\downarrow$} & \multicolumn{1}{c|}{F-score $\uparrow$} & \multicolumn{1}{c}{Time$\downarrow$} & \multicolumn{1}{c}{C-l1 $\downarrow$} & \multicolumn{1}{c|}{F-score $\uparrow$} \\ \hline
\iSDF \cite{ortiz2022isdf} (GT pose)               & 67.71                                & \textbf{4.77}                         & 77.87                                   & 25.89                                & {\ul 6.35}                            & {\ul 67.32}                             & 39.80                                & \textbf{4.98}                         & {\ul 73.94}                             & 26.17                                & 7.32                                  & 59.65                                   \\
\Point \cite{pan2024pin} (GT pose)              & 57.87                                & 12.23                                 & 40.89                                   & 22.41                                & 9.62                                  & 49.87                                   & 32.84                                & 10.63                                 & 46.34                                   & 21.29                                & 10.20                                 & 44.26                                   \\
MISO (GT pose)               & \textbf{1.49}                        & {\ul 4.97}                            & \textbf{81.04}                          & \textbf{0.80}                        & \textbf{6.07}                         & \textbf{71.23}                          & \textbf{0.95}                        & {\ul 5.70}                            & \textbf{73.98}                          & \textbf{0.70}                        & \textbf{6.31}                         & \textbf{69.28}                          \\
MISO (noisy pose)            & {\ul 7.96}                           & 5.36                                  & {\ul 78.43}                             & {\ul 4.11}                           & 6.85                                  & 65.21                                   & {\ul 4.79}                           & 5.89                                  & 71.61                                   & {\ul 3.35}                           & {\ul 6.86}                            & {\ul 64.59}                             \\ \hline
\end{tabular}%
}
\end{table*}

\begin{table*}[!t]
\centering
\vspace{-0.1cm}
\caption{\edit{Comparison between local mapping and SLAM on ScanNet \cite{dai2017scannet} using colored ICP odometry as initial guess. 
For each scene, we report translation RMSE (cm), rotation RMSE (deg), and Chamfer-L1 error (cm).}}
\label{tab:scannet_odometry}
\vspace{-0.1cm}
\resizebox{\textwidth}{!}{%
\begin{tabular}{|l|crr|crr|crr|crr|}
\hline
\multicolumn{1}{|c|}{Scene}  & \multicolumn{3}{c|}{0000}                                                                                               & \multicolumn{3}{c|}{0011}                                                                                              & \multicolumn{3}{c|}{0024}                                                                                              & \multicolumn{3}{c|}{0207}                                                                                               \\
\multicolumn{1}{|c|}{Method} & Tran err. $\downarrow$             & \multicolumn{1}{c}{Rot err. $\downarrow$} & \multicolumn{1}{c|}{C-l1 $\downarrow$} & Tran err. $\downarrow$            & \multicolumn{1}{c}{Rot err. $\downarrow$} & \multicolumn{1}{c|}{C-l1 $\downarrow$} & Tran err. $\downarrow$            & \multicolumn{1}{c}{Rot err. $\downarrow$} & \multicolumn{1}{c|}{C-l1 $\downarrow$} & Tran err. $\downarrow$             & \multicolumn{1}{c}{Rot err. $\downarrow$} & \multicolumn{1}{c|}{C-l1 $\downarrow$} \\ \hline
Color ICP~\cite{park2017colored} + Mapping             & \multicolumn{1}{r}{40.85}          & 10.17                                     & 15.81                                  & \multicolumn{1}{r}{17.25}         & 5.35                                      & 8.94                                   & \multicolumn{1}{r}{18.98}         & 5.04                                      & 10.89                                  & \multicolumn{1}{r}{19.47}          & 6.25                                      & 12.94                                  \\
Color ICP~\cite{park2017colored} + SLAM            & \multicolumn{1}{r}{\textbf{12.64}} & \textbf{5.56}                             & \textbf{10.1}                          & \multicolumn{1}{r}{\textbf{8.45}} & \textbf{3.14}                             & \textbf{7.88}                          & \multicolumn{1}{r}{\textbf{9.55}} & \textbf{3.68}                             & \textbf{8.85}                          & \multicolumn{1}{r}{\textbf{11.24}} & \textbf{4.25}                             & \textbf{9.63}                          \\ \hline
\end{tabular}%
}
\vspace{-0.3cm}
\end{table*}

The hierarchical, feature-based method presented above achieves robust and sufficiently accurate submap alignment.
To further enhance accuracy, we may finetune the submap pose estimates during a final alignment stage using predicted SDF values. 
Since only a few iterations are needed in typical scenarios, this approach allows us to preserve computational efficiency compared to other methods that directly use SDF for alignment.
We define the following SDF-based pairwise alignment cost for submap pair $(u,v)$,
\begin{equation}
	\csdf(T^w_u, T^w_v) = \!\!
	\sum_{j \in J^{uv}} \bigl(
	h^u (x^u_j; F^u) - h^v( (T^w_v)^{-1} T^w_u x^u_j; F^v)
	\bigr)^2.
	\label{eq:sdf_pairwise_cost}
\end{equation}
In \eqref{eq:sdf_pairwise_cost}, $J^{uv}$ contains the indices of observed points that are in the intersection region of the two submaps.
For each observation $j$, $x^u_j \in \Real^3$ denotes its position in the frame of submap $u$.
Compared to \eqref{eq:latent_pairwise_cost}, in \eqref{eq:sdf_pairwise_cost} we minimize the squared difference of the final SDF predictions from both submaps.
Using this in the pose-graph formulation leads to an SDF-based submap alignment.

\begin{problem}[SDF-based submap alignment]
\label{prob:sdf_alignment}
Given $n_s$ submaps with base pose estimates $\{\That^w_u\}_u$, 
solve for updated submap base poses via,
\begin{equation}
\underset{\{T^w_u\}_u \subset \SE(3)}{\min}
	\quad \sum_{(u,v) \in \Ecal} \csdf(T^w_u, T^w_v)
	+ 
	\sum_{u=1}^{n_s} \rho(\That^w_u, T^w_u),
\label{eq:sdf_pgo}
\end{equation}
where $\rho$ is the trust-region regularization defined in \eqref{eq:trust_region}.
\end{problem}

In \cref{alg:hier_alignment}, the SDF-based submap alignment is performed at the end to finetune the submap base poses (see line~\ref{alg:hier_align:sdf}). In \cref{sec:experiments:ablations}, we demonstrate that the combination of feature-based and SDF-based submap alignment yields the best performance in terms of both robustness and computational efficiency.

\myParagraph{Submap Fusion}
So far, we addressed the problem of aligning submaps in the global frame to reduce estimation drift. In some applications, there is an additional need to extract a global representation (\eg, a SDF or mesh) of the entire environment from the collection of local submaps. 
In \AlgName, we achieve this by using the average feature from all submaps to decode the global scene.
For any submap $u$, let $f^u(x^u)$ denote the output of its multiresolution feature field evaluated at a position $x^u \in \Real^3$ in the submap frame.
Given any query coordinate in the world frame $x^w \in \Real^3$, we first compute the weighted average of all submap features,
\begin{equation}
f^w(x^w) = 
\bigl (\sum_{u=1}^{n_s} w_u (x^w) \bigr)^{-1}
\sum_{u=1}^{n_s} w_u (x^w) f^u( (T^w_u)^{-1} x^w),
\end{equation}
where each submap is associated with a binary weight $w_u (x^w)$ computed using its bounding box,
\begin{equation*}
w_u (x^w) = \begin{cases}
1 & \text{if $x^w$ is inside submap $u$'s bounding box},\\
0 & \text{otherwise.}
\end{cases}
\end{equation*}
The final prediction is obtained by passing the average feature to the decoder network,
\begin{equation}
h^w(x^w) = D_\theta (f^w(x^w)).
\label{eq:fused_prediction}
\end{equation}
In summary, the proposed scheme achieves submap fusion via an averaging operation in the implicit feature space. 

\edit{Optionally, the fused prediction in \eqref{eq:fused_prediction} allows one to finetune the estimation by jointly optimizing all submap features and pose variables using global bundle adjustment:
\begin{equation}
\label{eq:global_ba}
\underset{
	\substack{
    F^u,\; T^w_u \in \SE(3), \\
    \{T_k^u\}_k \subset \SE(3),\; 
    \forall u \in [n_s]
  }}{\min}
	 \sum_{u=1}^{n_s}
	 \sum_{k=1}^{n_u} 
	 \sum_{j=1}^{m_k} 
        c_j\bigl(
            h^w(T^w_u T^u_k x^k_j)
        \bigr).
\end{equation}
In \eqref{eq:global_ba}, each $c_j$ is the same cost term induced by a local measurement as in \cref{sec:local_optimization}. 
Each observed local position $x^k_j$ is transformed to the world frame to evaluate the reconstruction.
The integers $n_s, n_u, m_k$ denote the number of submaps, the number of robot poses in submap $u$, and the number of measurements made at robot pose $k$, respectively.
In \cref{sec:experiments:newer_college}, we show that this global bundle adjustment step allows the method to further improve the reconstruction quality on outdoor datasets.}

\section{Evaluation}
\label{sec:experiments}

In this section, we evaluate \AlgName using several publicly available real-world datasets.
Our results show that \AlgName achieves superior computational efficiency and accuracy compared to state-of-the-art approaches during both local SLAM and global submap alignment and fusion.

\subsection{Experiment Setup}
\label{sec:experiments:setup}
\edit{We used four datasets in our experiments: Replica \cite{replica19arxiv}, ScanNet \cite{dai2017scannet}, \Fastcamo \cite{tang2023mips}, and Newer College~\cite{zhang2021ncdmulti}.}
Among these datasets, Replica is used to pre-train the encoders and decoder networks offline; see Appendix~\ref{sec:grid_details} for details.
Then, the pre-trained weights are used to evaluate \AlgName on the real-world ScanNet dataset and the large-scale \Fastcamo dataset without additional fine-tuning.
\edit{Lastly, we present a larger scale evaluation on sequences from the outdoor Newer College dataset.}

For quantitative evaluations, 
we compare the multiresolution submaps in \AlgName against the MLP-based representation from iSDF \cite{ortiz2022isdf} and the neural-point-based representation from PIN-SLAM \cite{pan2024pin}, using default parameters from their open-source code.
In the following, we refer to these two baselines as \iSDF and \Point, respectively.
When evaluating the performance of submap alignment,
we introduce two baseline techniques from state-of-the-art submap-based systems.
The first is the correspondence-based method from MIPS-Fusion \cite{tang2023mips}, hereafter referred to as \mips.
The second is the correspondence-free method from Vox-Fusion++ \cite{zhai2024vox}, hereafter referred to as \vfpp.
\edit{In addition, we compare against an ICP-based method introduced by Choi \etal~\cite{choi2015robust} and implemented in Open3D \cite{zhou2018open3d}.
Given the raw surface points observed in the submaps, this baseline first 
aligns pairs of submaps via point-to-plane ICP on voxel-downsampled point clouds. 
The aligned submaps are then fused in the global frame via outlier-robust pose graph optimization.
Whereas the submaps used by the neural approaches (including ours) have a fine-level resolution of 0.1~m, we allow the ICP baseline to use a higher resolution of 0.02~m and the other parameters are set to default.}
\edit{For \AlgName, we implement each submap as a two-level multiresolution feature grid with spatial resolutions [0.5~m, 0.1~m] for indoor and [1.0~m, 0.2~m] for outdoor scenes}. 
The feature dimension at each level is set to $d=4$.
All methods are implemented using PyTorch \cite{paszke2019pytorch}.
All experiments are run on a laptop equipped with an Intel i9-14900HX CPU, an NVIDIA GeForce RTX 4080 GPU, and 12 GB of GPU memory.

\subsection{Evaluation on ScanNet Dataset}
\label{sec:experiments:scannet}

ScanNet \cite{dai2017scannet} features a collection of real-world RGB-D sequences with accurate camera poses and 3D reconstructions.
In the following, we use ScanNet to separately evaluate the proposed local SLAM (\cref{sec:local_optimization}) and global alignment and fusion (\cref{sec:global_optimization}) methods.
Joint evaluation is reported in the next subsection on the larger \Fastcamo \cite{tang2023mips} datasets.

\begin{figure}[t]
    \centering
    \begin{subfigure}[t]{0.25\linewidth}
        \centering
        \includegraphics[trim=10 0 10 0, clip, width=\linewidth]{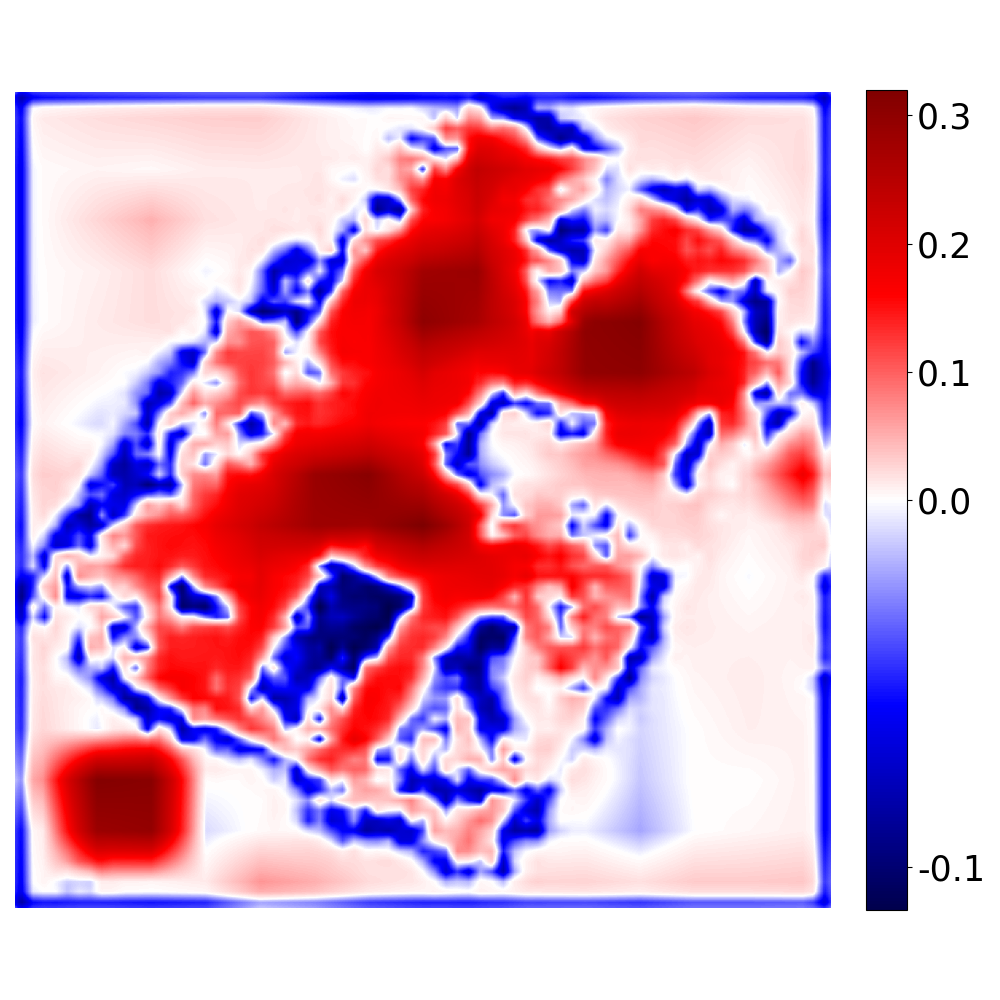}
        \caption{\AlgName (init)}
        \label{fig:scannet_mapping:init}
    \end{subfigure}%
    \begin{subfigure}[t]{0.25\linewidth}
        \centering
        \includegraphics[trim=10 0 10 0, clip, width=\linewidth]{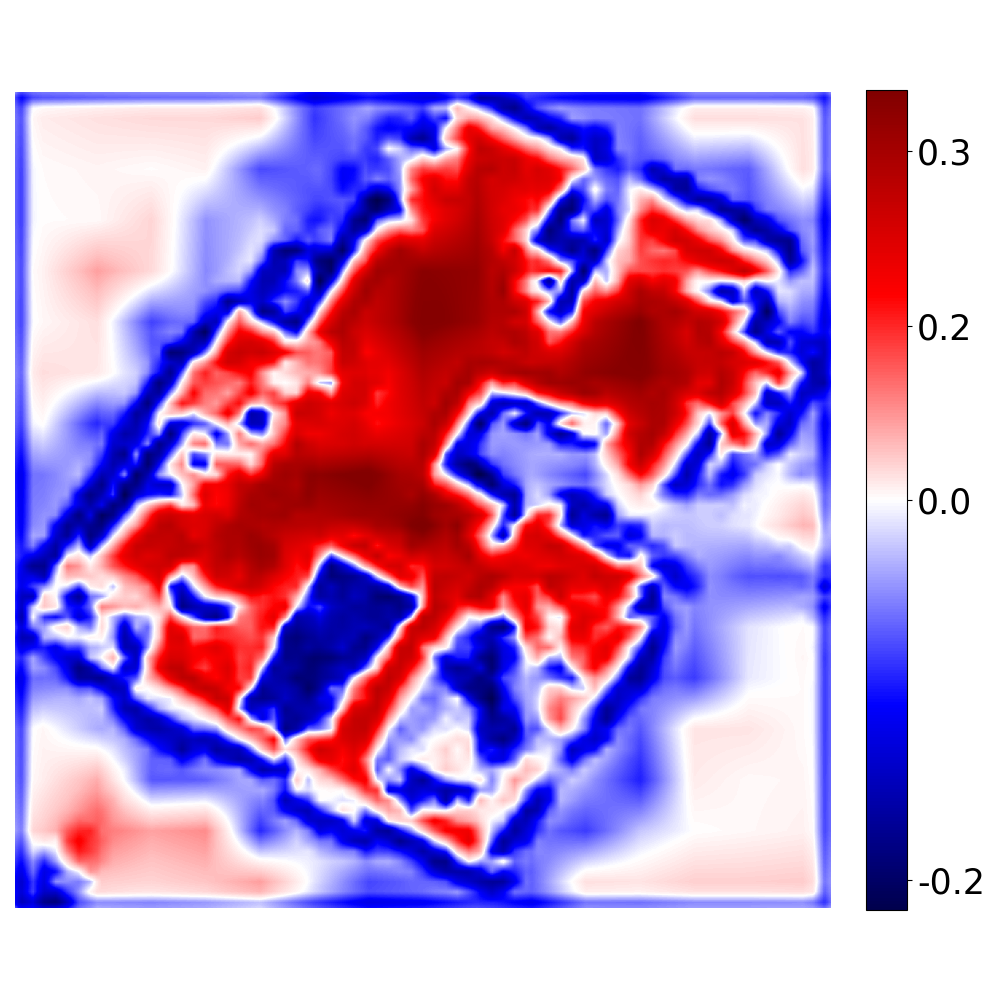}
        \caption{\AlgName (opt)}
        \label{fig:scannet_mapping:opt}
    \end{subfigure}%
    \begin{subfigure}[t]{0.25\linewidth}
        \centering
        \includegraphics[trim=10 0 10 0, clip, width=\linewidth]{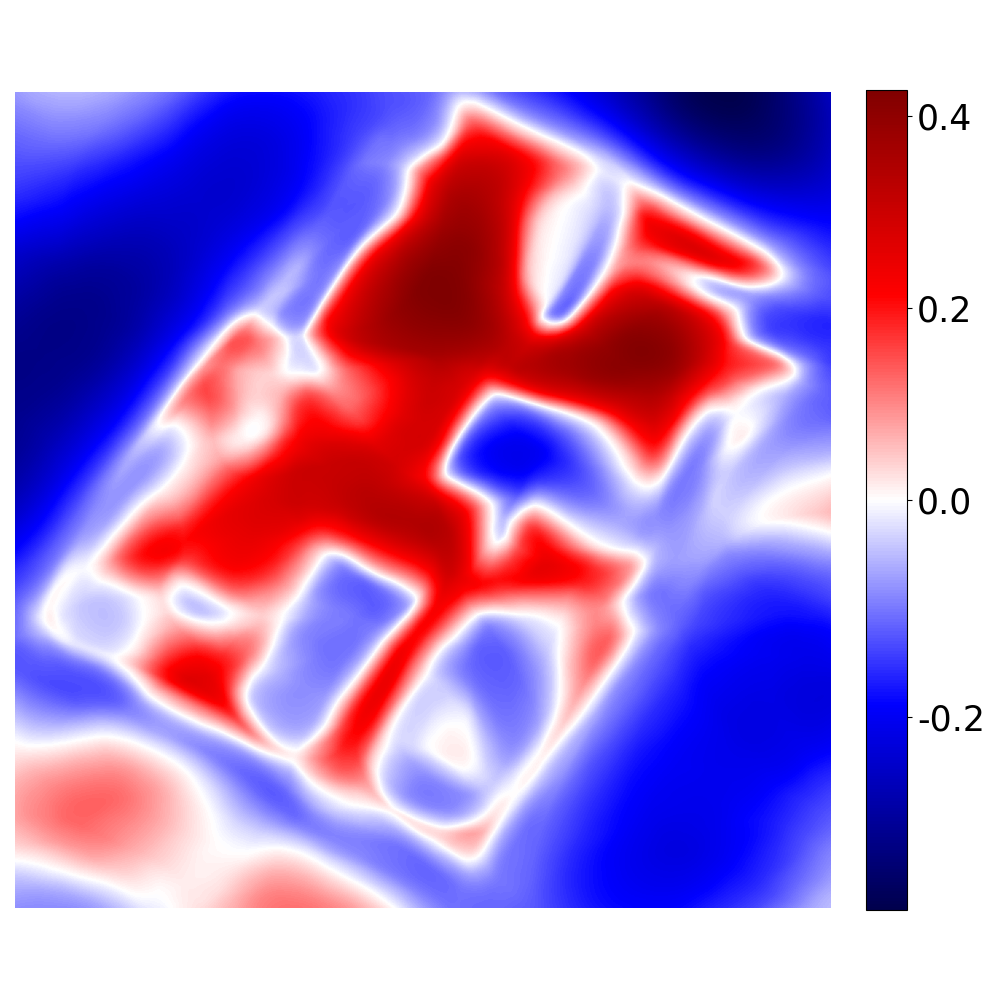}
        \caption{\iSDF}
        \label{fig:scannet_mapping:isdf}
    \end{subfigure}%
    \begin{subfigure}[t]{0.25\linewidth}
        \centering
        \includegraphics[trim=10 0 10 0, clip, width=\linewidth]{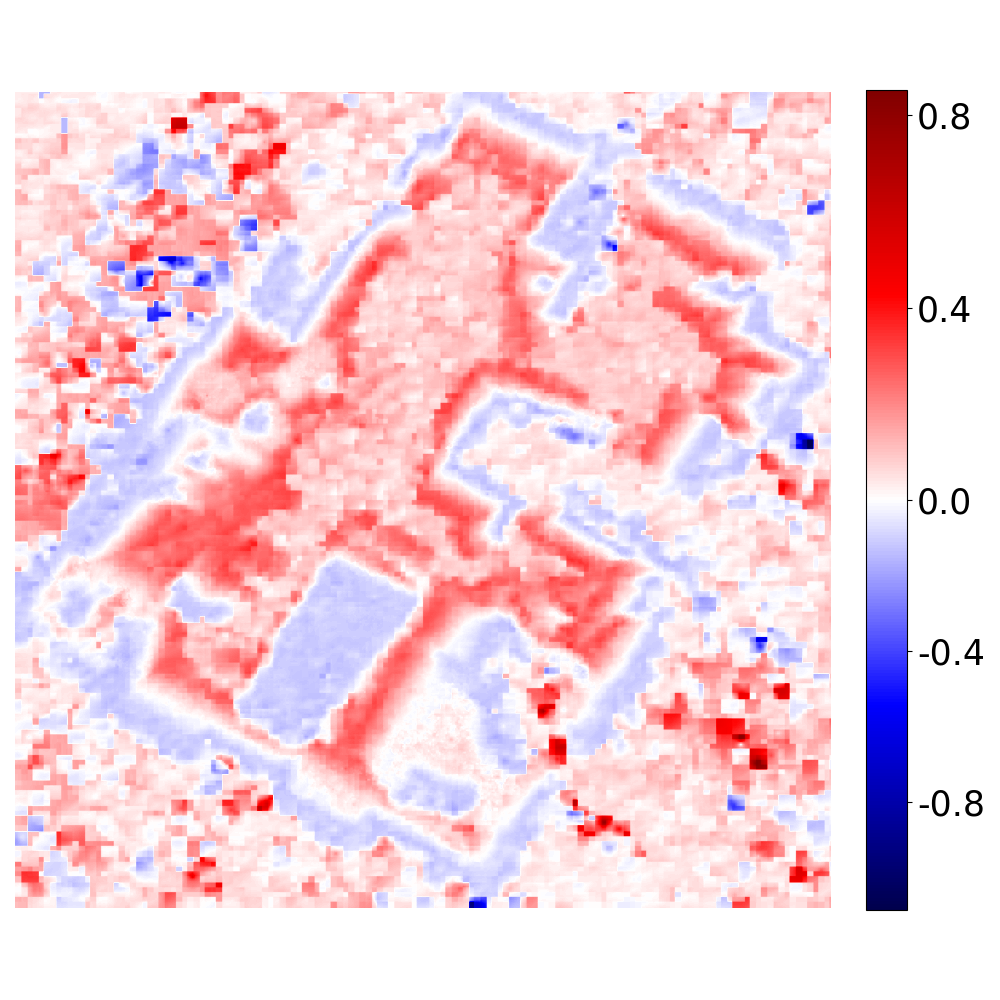}
        \caption{\Point}
        \label{fig:scannet_mapping:pin}
    \end{subfigure}
    \caption{Visualization of estimated SDF at a fixed height on {ScanNet} scene 0207. \AlgName performs SLAM using noisy poses. \iSDF and \Point use ground truth poses and only perform mapping. }
    \label{fig:scannet_mapping}
\end{figure}

\myParagraph{Local SLAM evaluation}
Our first experiment evaluates the performance of the local optimization approach in \AlgName (\cref{alg:hier_local_mapping}).
Since each scene in ScanNet is relatively small, we represent the entire scene as a single submap.
For each scene, we run \AlgName from initial pose estimates obtained by perturbing the ground truth poses with $3$~deg and $5$~cm errors.
For comparison, we also run \AlgName and the baseline \iSDF and \Point methods using ground truth poses, where pose optimization is disabled. 
\cref{tab:scannet_mae} reports results on four ScanNet scenes.
For each method, we report its GPU time and the mesh reconstruction error against the ground truth, measured in Chamfer-L1 distance and F-score.
Both \AlgName variants are optimized for 20 epochs.
For the \iSDF and \Point baselines, since they do not have access to pre-training, we optimize both for 100 epochs for a fair comparison.
As shown in \cref{tab:scannet_mae}, \AlgName achieves either the best or second-best reconstruction results on all scenes.
Using ground truth poses, \AlgName achieves superior speed, requiring only $0.7$–$1.5$~sec for optimization.
\AlgName with noisy poses takes longer due to the additional pose estimation but is still significantly faster than the baseline techniques.

\begin{figure}[t]
    \centering
    \begin{subfigure}[t]{0.24\linewidth}
        \centering
        \includegraphics[trim=0 0 0 0, clip, width=\linewidth]{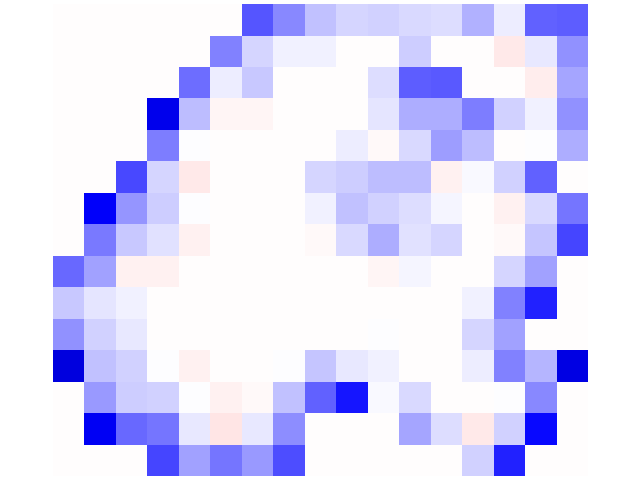}
        \caption{Level 1 $r^\text{in}_1$}
    \end{subfigure}
    \begin{subfigure}[t]{0.24\linewidth}
        \centering
        \includegraphics[trim=0 0 0 0, clip, width=\linewidth]{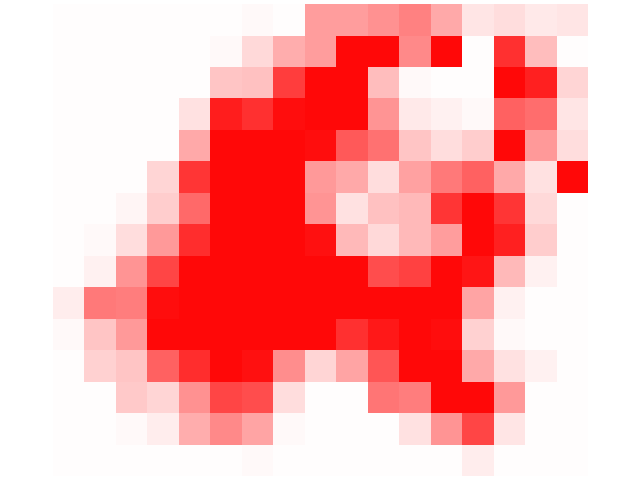}
        \caption{Level 1 $r^\text{in}_3$}
    \end{subfigure} 
    \begin{subfigure}[t]{0.22\linewidth}
        \centering
        \includegraphics[trim=0 0 0 0, clip, width=\linewidth]{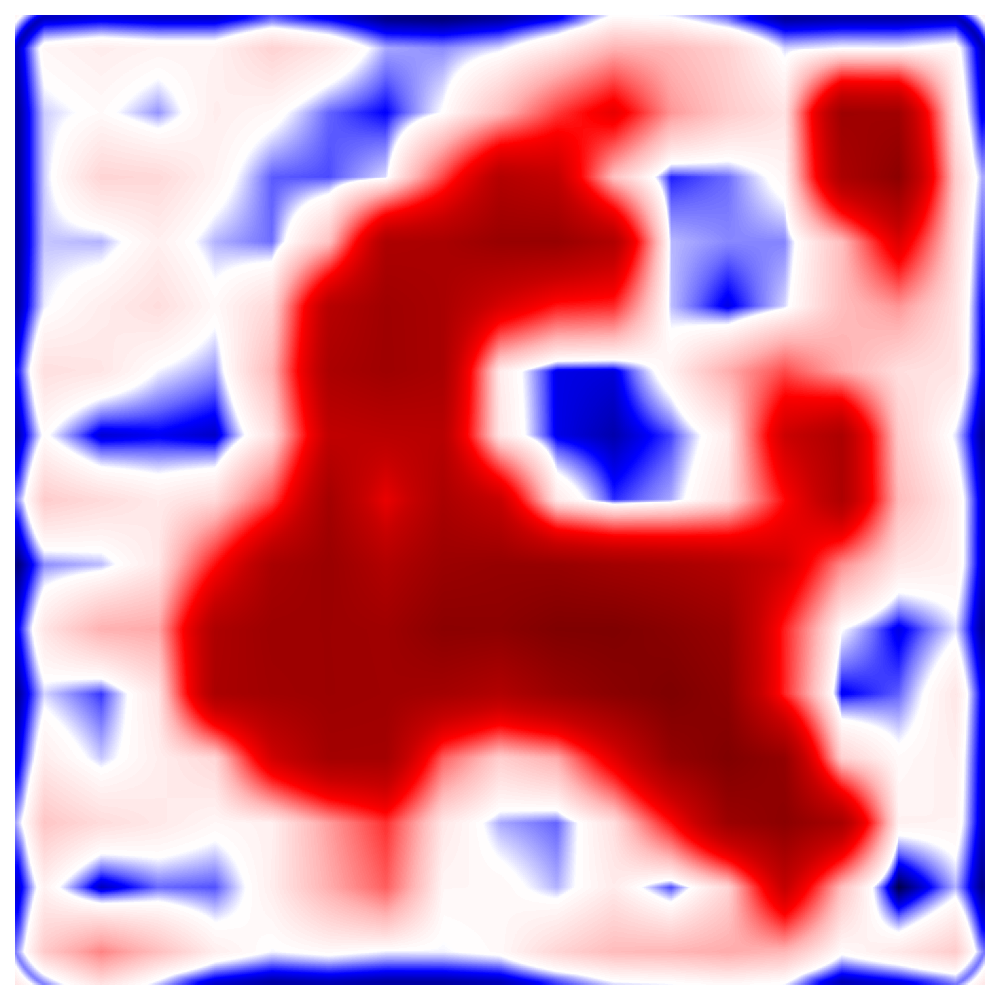}
        \caption{Level 1 SDF}
    \end{subfigure}
    \begin{subfigure}[t]{0.22\linewidth}
        \centering
        \includegraphics[trim=400 0 400 0, clip, width=\linewidth]{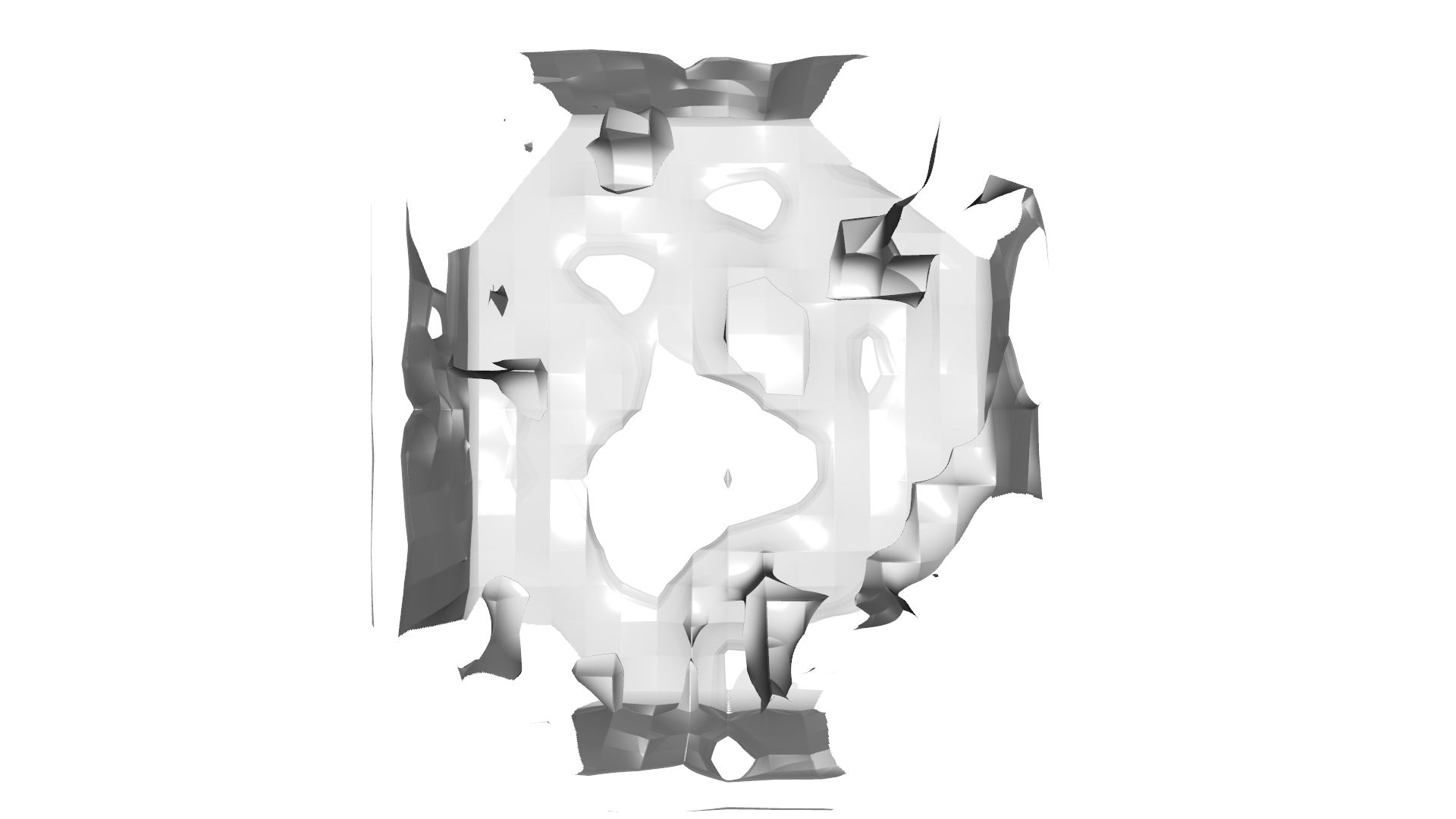}
        \caption{Level 1  mesh}
    \end{subfigure}
    \\
    \begin{subfigure}[t]{0.24\linewidth}
        \centering
        \includegraphics[trim=0 0 0 0, clip, width=\linewidth]{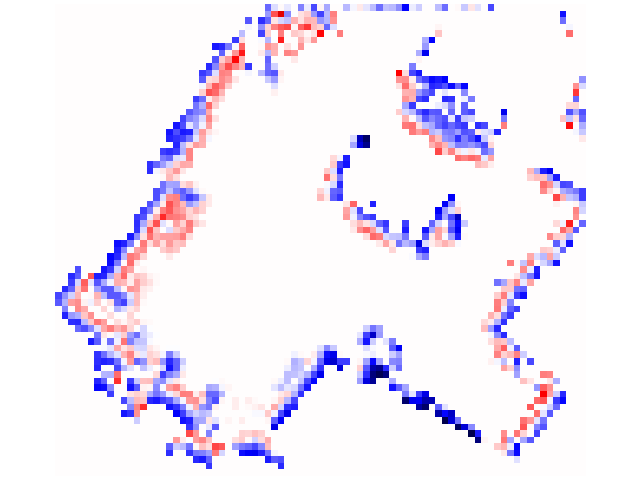}
        \caption{Level 2 $r^\text{in}_1$}
    \end{subfigure}
    \begin{subfigure}[t]{0.24\linewidth}
        \centering
        \includegraphics[trim=0 0 0 0, clip, width=\linewidth]{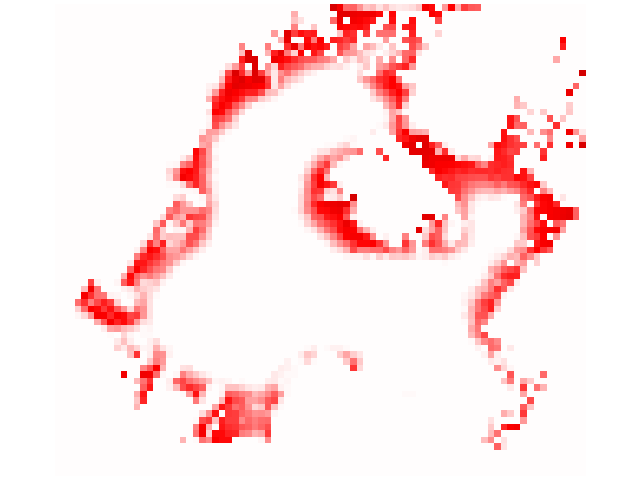}
        \caption{Level 2 $r^\text{in}_3$}
    \end{subfigure} 
    \begin{subfigure}[t]{0.22\linewidth}
        \centering
        \includegraphics[trim=0 0 0 0, clip, width=\linewidth]{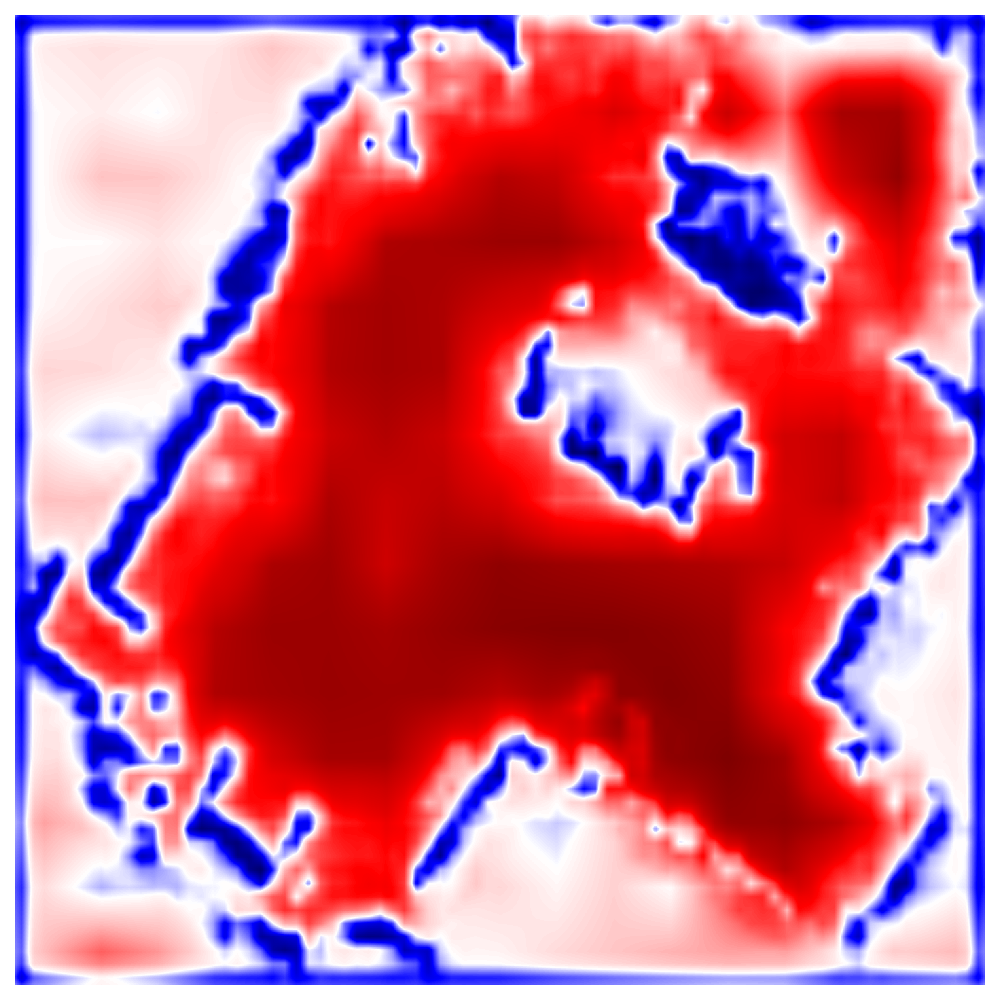}
        \caption{Level 2 SDF}
    \end{subfigure}
    \begin{subfigure}[t]{0.22\linewidth}
        \centering
        \includegraphics[trim=400 0 400 0, clip, width=\linewidth]{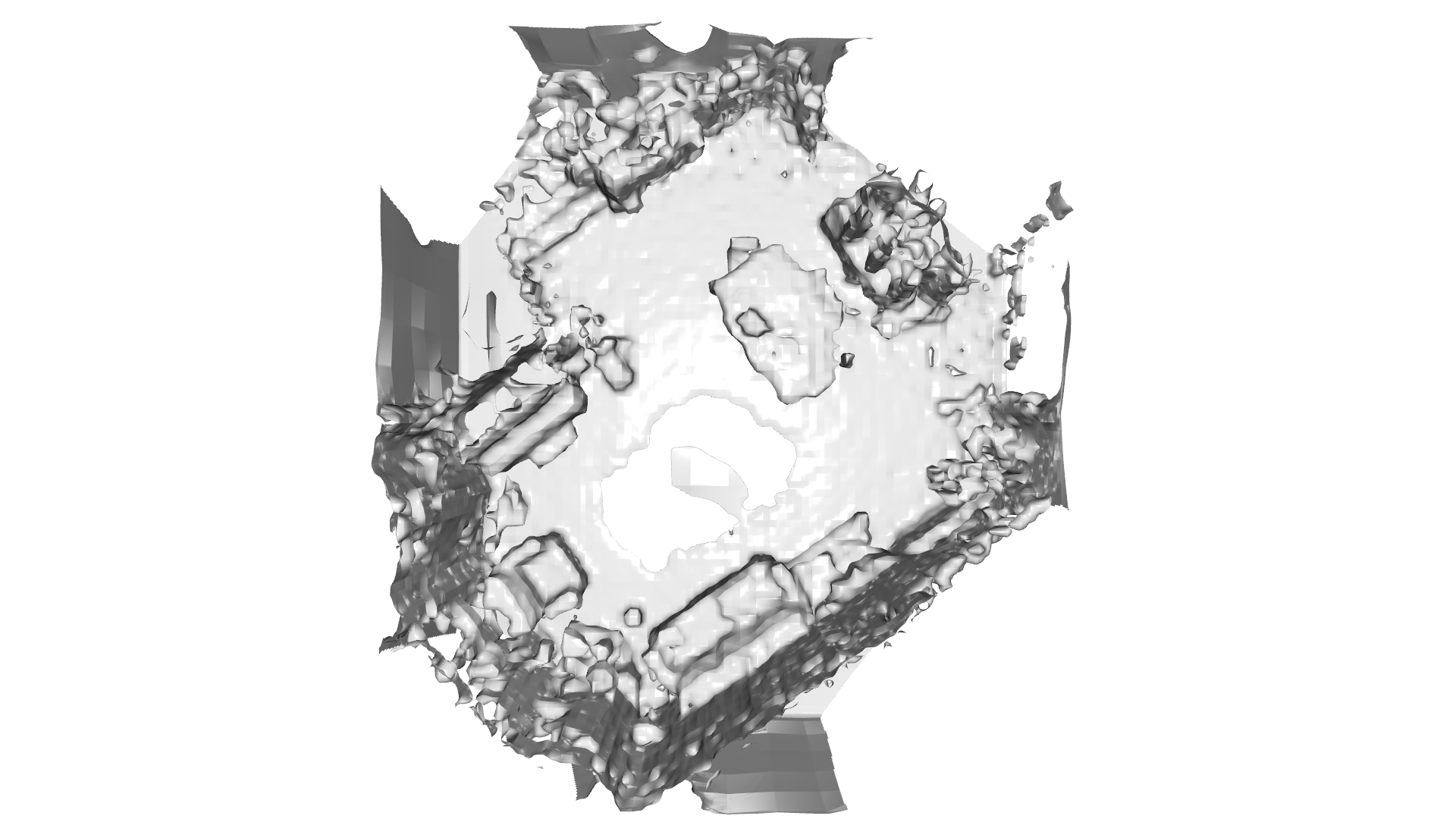}
        \caption{Level 2  mesh}
    \end{subfigure}
    \caption{Inputs and output predictions from learned hierarchical encoders on ScanNet scene 0024. Red and blue show positive and negative residual or SDF values, respectively. }
    \label{fig:encoder_io}
\end{figure}

\Cref{fig:scannet_mapping} shows a qualitative comparison of the estimated SDF at a fixed height on scene 0207.
\AlgName starts from noisy poses and performs full SLAM, while the baseline methods use ground truth poses and perform mapping only.
\cref{fig:scannet_mapping:init} shows the initialization produced by the learned encoder (corresponding to line~\ref{alg:hier_local_mapping:init} in \cref{alg:hier_local_mapping}), which already captures the scene geometry to a large extent.
The remaining errors and missing details are fixed after running 20 optimization epochs, as shown in \cref{fig:scannet_mapping:opt}.
Because \iSDF uses a single MLP to represent the scene, its output (see \cref{fig:scannet_mapping:isdf}) is overly smooth, which leads to lower recall and F-score than \AlgName.
Lastly, the SDF prediction from \Point (\cref{fig:scannet_mapping:pin}) is especially noisy in free space, due to the lack of neural point features far away from the surface.

To provide more insight on the performance of the learned hierarchical encoders, 
\cref{fig:encoder_io} visualizes the inputs and output predictions on scene 0024.
For this visualization, we do not use noisy pose estimates.
At each level (shown as a row in the figure), we visualize two channels $r^\text{in}_1, r^\text{in}_3$ of the voxelized input residuals defined in \eqref{eq:encoder_input_sdf} and \eqref{eq:encoder_lower_bnd}, the predicted SDF, and the predicted 3D mesh.
The level 1 encoder, although with a coarse resolution of $0.5$~m, already captures the rough scene geometry and free space SDF. 
On top of this coarse prediction, the level 2 encoder is able to add fine details and recover objects such as the sofa and the table.

\begin{table}[t]
\centering
\renewcommand{\arraystretch}{1.3}
\caption{Submap alignment evaluation on ScanNet \cite{dai2017scannet} from initial errors of $5$~deg and $0.20$~m.
For each scene, we report the GPU time (sec), and the final submap rotation error (deg) and translation error (m) compared to ground truth. Results averaged over 10 trials. Best and second-best results are highlighted in \textbf{bold} and \underline{underline}.}
\label{tab:scannet_align}
\resizebox{\linewidth}{!}{%
\begin{tabular}{|l|rrr|rrr|}
\hline
\multicolumn{1}{|c|}{Scene} & \multicolumn{3}{c|}{0011 (159 poses, 4 submaps)} & \multicolumn{3}{c|}{0024 (227 poses, 5 submaps)} \\
\multicolumn{1}{|c|}{Methods} & \multicolumn{1}{c}{Time$\downarrow$} & \multicolumn{1}{c}{Rot err$\downarrow$} & \multicolumn{1}{c|}{Tran err$\downarrow$} & \multicolumn{1}{c}{Time$\downarrow$} & \multicolumn{1}{c}{Rot err$\downarrow$} & \multicolumn{1}{c|}{Tran err$\downarrow$} \\ \hline
MIPS \cite{tang2023mips} & {\ul 59.23} & 3.93 & 0.15 & {\ul 123.63} & 4.52 & 0.19 \\
VFPP \cite{zhai2024vox} & 74.16 & {\ul 2.40} & {\ul 0.11} & 137.58 & \textbf{2.24} & \textbf{0.08} \\
MISO & \textbf{10.53} & \textbf{1.86} & \textbf{0.06} & \textbf{18.85} & {\ul 3.14} & {\ul 0.14} \\
ICP \cite{choi2015robust} & - & 2.62 & 0.19 & - & 9.18 & 0.53 \\ \hline
\end{tabular}%
}
\end{table}

\edit{Lastly, to evaluate local SLAM under more challenging initial trajectory estimates, 
we include an experiment with color ICP odometry~\cite{park2017colored} as the initial guess.
When running local SLAM, we let \AlgName incrementally process a sliding window of 10 frames and disable pose regularization in \eqref{eq:local_mapping}.
\cref{tab:scannet_odometry} shows that performing local SLAM substantially improves performance compared to mapping only.
}

\myParagraph{Submap alignment and fusion evaluation}
The next set of experiments evaluates the submap alignment and fusion approach  in \AlgName.
We run MIPS-Fusion \cite{tang2023mips} to obtain the submap information.
We select scene 0011 and perform local SLAM within all submaps starting from noisy poses with $1$~deg and $0.1$~m errors.
We then evaluate the performance of the proposed alignment (\cref{alg:hier_alignment}) and the baseline \edit{MIPS \cite{tang2023mips}, VFPP~\cite{zhai2024vox}, and ICP~\cite{choi2015robust}} techniques under increasing submap alignment errors.
All methods are run for 100 iterations before evaluating their results.
For \AlgName, we allocate 45 iterations for alignment at each feature level and 10 iterations for final alignment using SDF, which corresponds to setting $k_{f,1}=k_{f,2}=45$ and $k_s=10$ in \cref{alg:hier_alignment}.
\edit{All neural methods} use the same trust-region-based pose regularization, where the trust-region radius $\tau$ in \eqref{eq:trust_region} is set based on the initial submap alignment error.
For each setting of the initial error shown on the $x$ axis, we perform 10 random trials and show the final results as boxplots in \cref{fig:scannet_align}.
With a small initial alignment error of $1$~deg and $0.1$~m, all methods produce accurate alignment results.
However, as the initial error increases, both MIPS and VFPP methods quickly fail, showing that solely relying on SDF prediction is very sensitive to the initial guess.
\edit{The ICP baseline results in higher variance indicating more frequent alignment failures despite the use of robust optimization.}
The proposed hierarchical alignment in \AlgName is more robust under more severe misalignments and outperforms the baseline methods significantly.
\edit{Since our experiments are offline, we record the total GPU time and final accuracy. 
Timing results for ICP are omitted as the method is not implemented on GPU.
\cref{tab:scannet_align} reports the results on scenes 0011 and 0024.}
The initial submap alignment error is set to $5$~deg and $0.2$~m, and results are collected over 10 trials.
\AlgName is significantly faster since the majority of iterations in \cref{alg:hier_alignment} only uses features for alignment and does not need to decode the SDF values.
In terms of accuracy, \AlgName demonstrates competitive performance and achieves either best or second-best results.

\begin{figure}[t]
    \centering
    \begin{subfigure}[t]{0.95\linewidth}
        \centering
        \includegraphics[trim=0 10 0 0, clip,width=\linewidth]{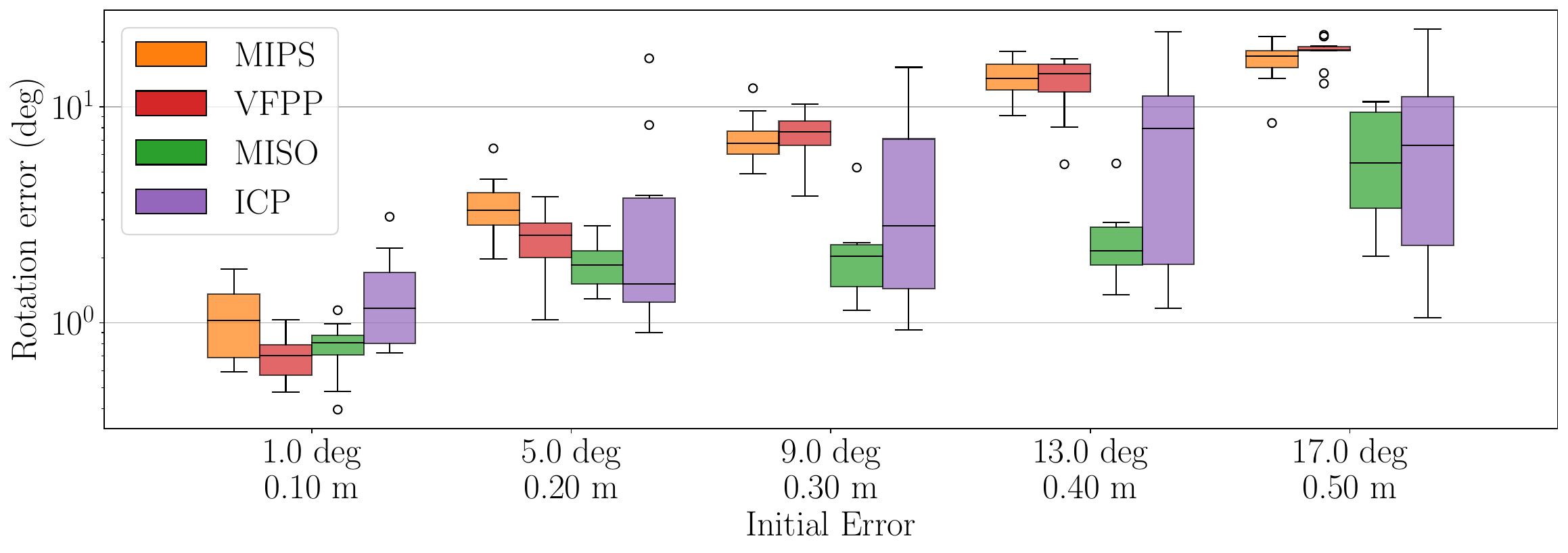}
        \caption{Rotation error (deg)}
        \label{fig:scannet_align:rot}
    \end{subfigure}
    \\
    \begin{subfigure}[t]{0.95\linewidth}
        \centering
        \includegraphics[trim=0 10 0 0, clip,width=\linewidth]{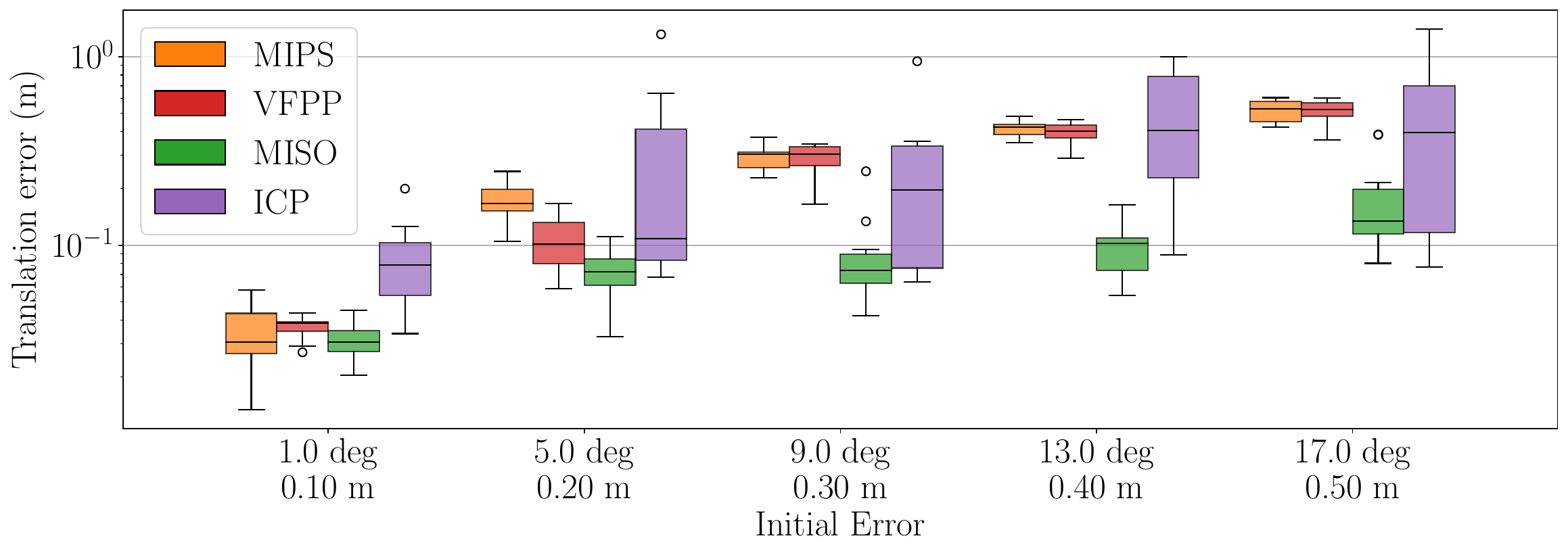}
        \caption{Translation error (m)}
        \label{fig:scannet_align:tran}
    \end{subfigure}
    \caption{Evaluation of submap alignment on ScanNet scene 0011 under varying initial errors over 10 trials for each configuration.}
    \label{fig:scannet_align}
\end{figure}

\begin{figure}[t]
    \centering
    \begin{subfigure}[t]{0.48\linewidth}
        \centering
        \includegraphics[trim=250 10 300 20, clip,width=\linewidth]{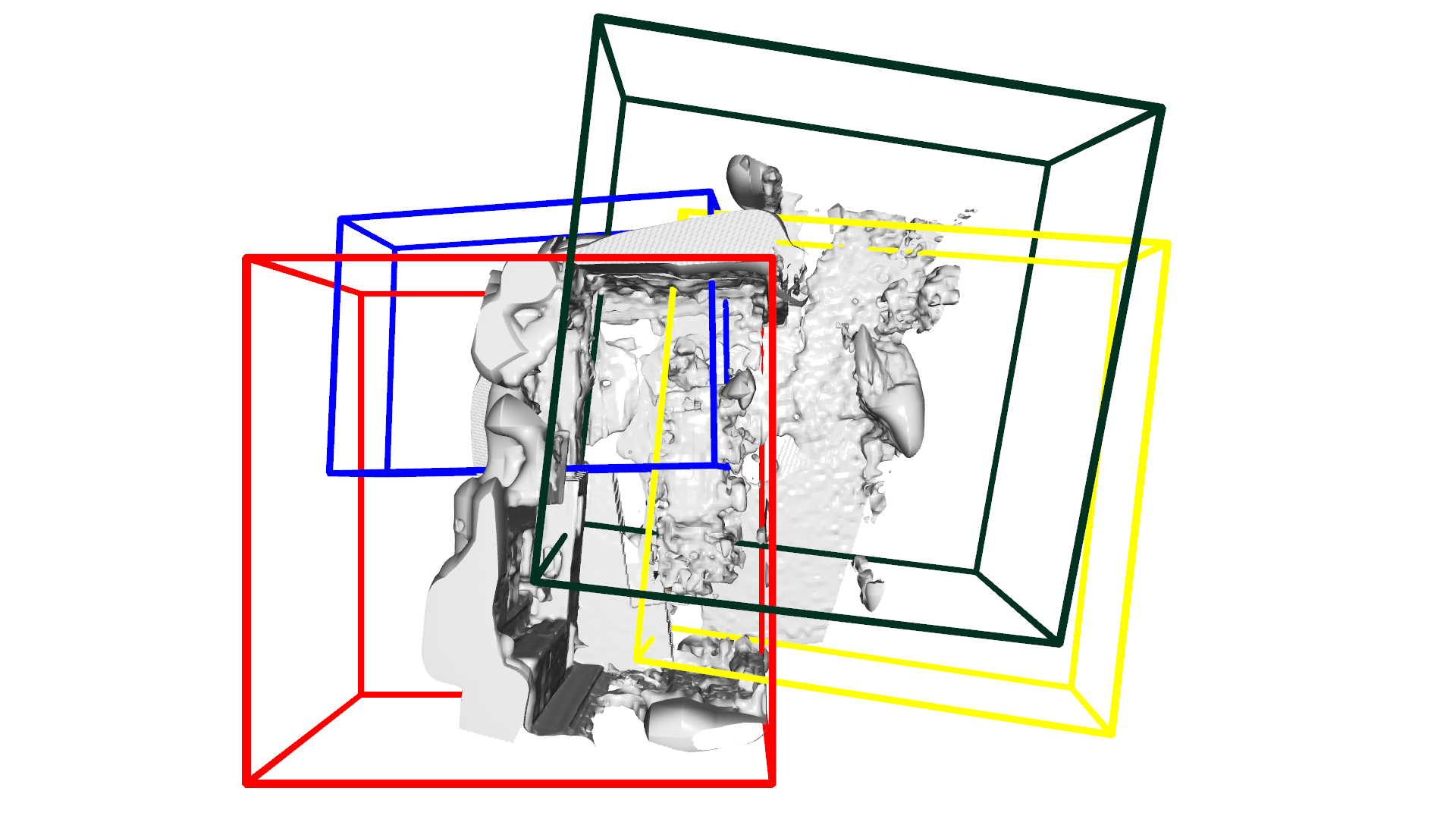}
        \caption{Before alignment}
        \label{fig:scannet_fusion:mesh_pre}
    \end{subfigure} 
    ~
    \begin{subfigure}[t]{0.48\linewidth}
        \centering
        \includegraphics[trim=250 10 300 20, clip,width=\linewidth]{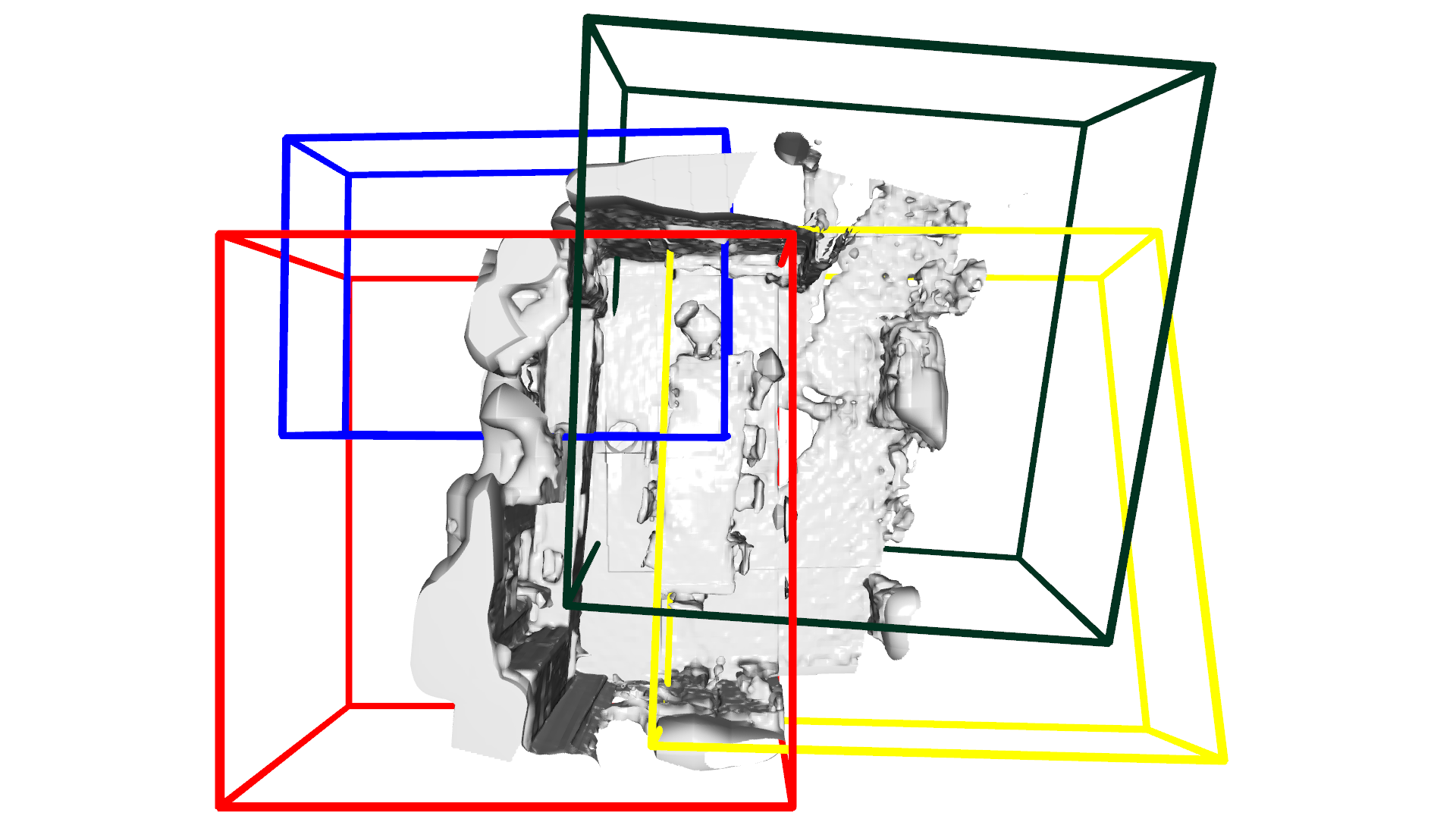}
        \caption{After alignment}
        \label{fig:scannet_fusion:mesh_post}
    \end{subfigure} 
    \caption{
    3D mesh reconstruction with oriented submap bounding boxes: (a) before submap alignment and (b) after submap alignment on ScanNet scene 0011.}
    \label{fig:scannet_fusion}
\end{figure}

\begin{table*}[t]
\centering
\renewcommand{\arraystretch}{1.3}
\caption{\edit{Evaluation on Newer College dataset \cite{zhang2021ncdmulti}.
For each scene, we report translation RMSE (m), rotation RMSE (deg), Chamfer-L1 error (cm), and F-score (\%) with a threshold of $20$ cm. Best and second-best results are highlighted in \textbf{bold} and \underline{underline}, respectively.}}
\label{tab:ncd}
\resizebox{0.8\textwidth}{!}{%
\begin{tabular}{|l|rrrr|rrrr|}
\hline
\multicolumn{1}{|c|}{Scene}  & \multicolumn{4}{c|}{Quad (1991 scans)}                                                                                                                                   & \multicolumn{4}{c|}{Maths Institute (2160 scans)}                                                                                                                        \\
\multicolumn{1}{|c|}{Method} & \multicolumn{1}{c}{Tran err. $\downarrow$} & \multicolumn{1}{c}{Rot err. $\downarrow$} & \multicolumn{1}{l}{C-l1 $\downarrow$} & \multicolumn{1}{l|}{F-score $\uparrow$} & \multicolumn{1}{c}{Tran err. $\downarrow$} & \multicolumn{1}{c}{Rot err. $\downarrow$} & \multicolumn{1}{l}{C-l1 $\downarrow$} & \multicolumn{1}{l|}{F-score $\uparrow$} \\ \hline
ICP \cite{zhou2018open3d} + \AlgName~Mapping                & 4.06                                       & 11.02                                     & 42.0                                  & 32.58                                   & 3.57                                       & 15.66                                     & 48.87                                 & 24.68                                   \\
KISS-ICP \cite{vizzo2023kiss} + \AlgName~Mapping           & \textbf{0.08}                              & {\ul 0.98}                                & \textbf{13.96}                        & {\ul 82.68}                             & 0.24                                       & 2.4                                       & \textbf{20.65}                        & {\ul 63.41}                             \\
PIN-SLAM \cite{pan2024pin}                    & {\ul 0.10}                                 & \textbf{0.97}                             & 14.74                                 & 78.85                                   & \textbf{0.15}                              & \textbf{1.37}                             & {\ul 20.66}                           & \textbf{70.18}                          \\
MISO (Odometry)              & 0.47                                       & 1.58                                      & 18.53                                 & 68.41                                   & 0.25                                       & {\ul 1.57}                                & 23.28                                 & 57.78                                   \\
MISO (Full)                  & {\ul 0.10}                                 & {\ul 0.98}                                & {\ul 14.01}                           & \textbf{83.0}                           & {\ul 0.22}                                 & {\ul 1.57}                                & 20.89                                 & 62.07                                   \\ \hline
\end{tabular}%
}
\vspace{-0.3cm}
\end{table*}

\Cref{fig:scannet_fusion} visualizes the reconstructed global mesh on scene 0011 before and after submap alignment.
To extract the global SDF, we use the approach presented at the end of \cref{sec:global_optimization}, which uses the average feature from all submaps to decode the scene in the world frame.
Along with the mesh, we also show the oriented bounding boxes of all submaps.
As shown in \cref{fig:scannet_fusion:mesh_pre}, the initial mesh is very noisy in areas where multiple submaps overlap.
\Cref{fig:scannet_fusion:mesh_post} shows that \AlgName effectively fixes this error and recovers clean geometry (\eg, the table and chairs) in the global frame.

\subsection{\edit{Qualitative Evaluation on \Fastcamo Dataset}}
\label{sec:experiments:fastcamo}

\begin{figure}[t]
    \centering
    \includegraphics[width=\linewidth]{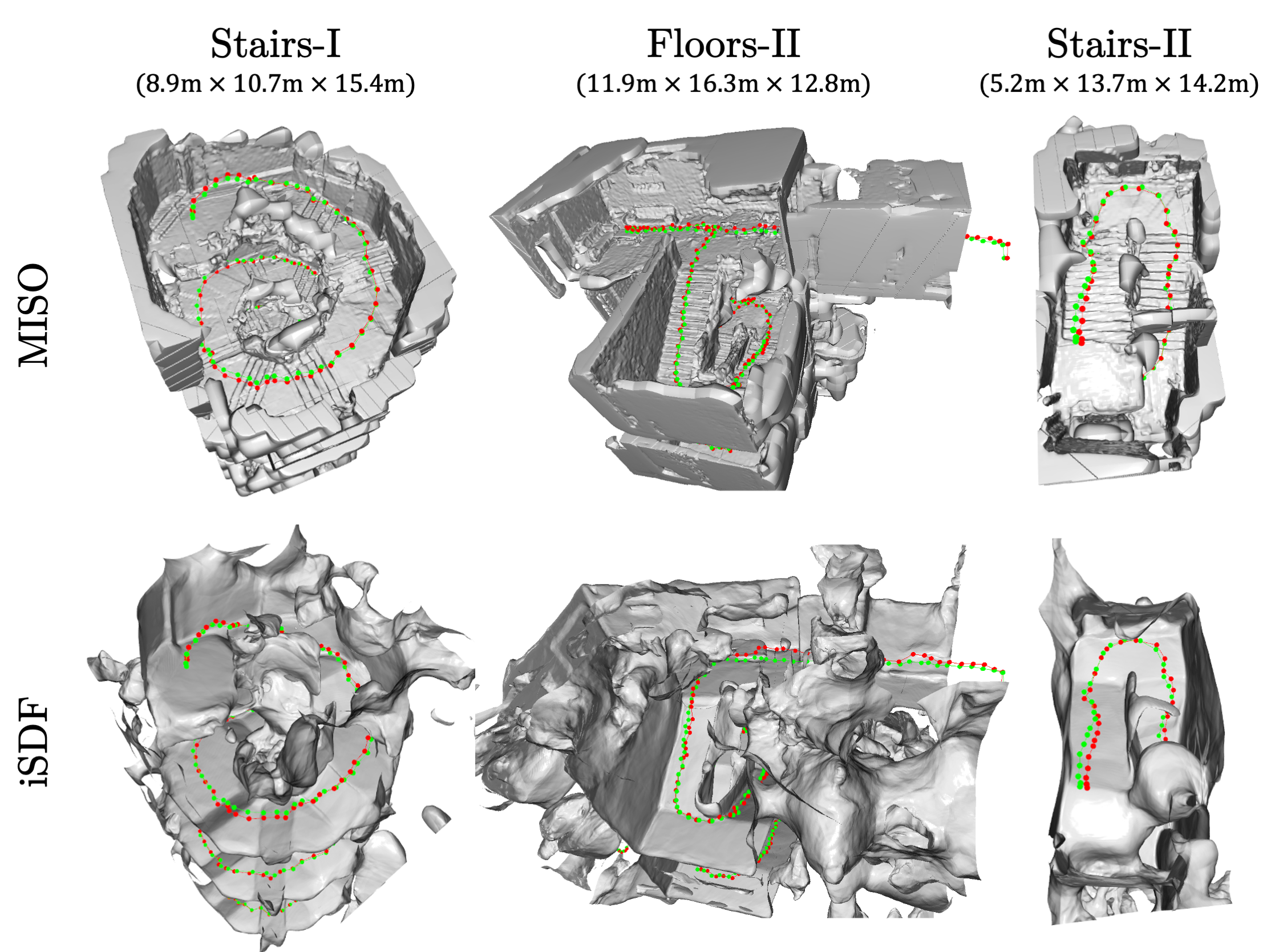}
    \caption{Qualitative evaluation on the \Fastcamo dataset \cite{tang2023mips}. We show the final reconstructed mesh along with estimated and reference robot trajectories (from \cite{tang2023mips}) in red and green, respectively.}
    \label{fig:fastcamo}
\end{figure}

\Fastcamo \cite{tang2023mips} is a recent dataset with multiple real-world RGB-D sequences.
Compared to ScanNet, this dataset features larger indoor environments (up to $200$ m$^2$) and fast camera motions, making it suitable for evaluating the overall accuracy and scalability of our method.
\edit{Since ground truth is not available, 
we only focus on qualitative evaluation in this experiment.}
We compare the performance of \AlgName against \iSDF, where the latter is extended to perform joint optimization over both robot poses and the map.
The input pose estimates are obtained by perturbing robot poses estimated by MIPS-Fusion \cite{tang2023mips} by $3$~deg rotation error and $0.05$~m translation error in their corresponding submaps, and additionally perturbing all submap base poses by $5$~deg rotation error and $0.1$~m translation error.
The trust region radius used in pose regularization \eqref{eq:trust_region} is set correspondingly for both methods.
For \iSDF, we run optimization for 300 epochs to ensure  convergence. 
For \AlgName, we run 150 epochs of local SLAM in parallel within all submaps,
followed by submap alignment where we skip the coarse level feature alignment as we observe it leads to degraded results on these datasets.
\cref{fig:fastcamo} shows qualitative comparisons of the estimated trajectories and meshes on the Floors-II, Stairs-I, and Stairs-II sequences.
\Cref{fig:demo} shows more visualizations for \AlgName on the Floors-I sequence.
\AlgName is able to accurately represent these larger-scale scenes while preserving fine details such as staircases and chairs.
In comparison, \iSDF loses many details and introduces more artifacts in areas where there are fewer observations.

\begin{figure}[t]
    \centering
    \includegraphics[trim=0 100 0 30, clip,width=\linewidth]{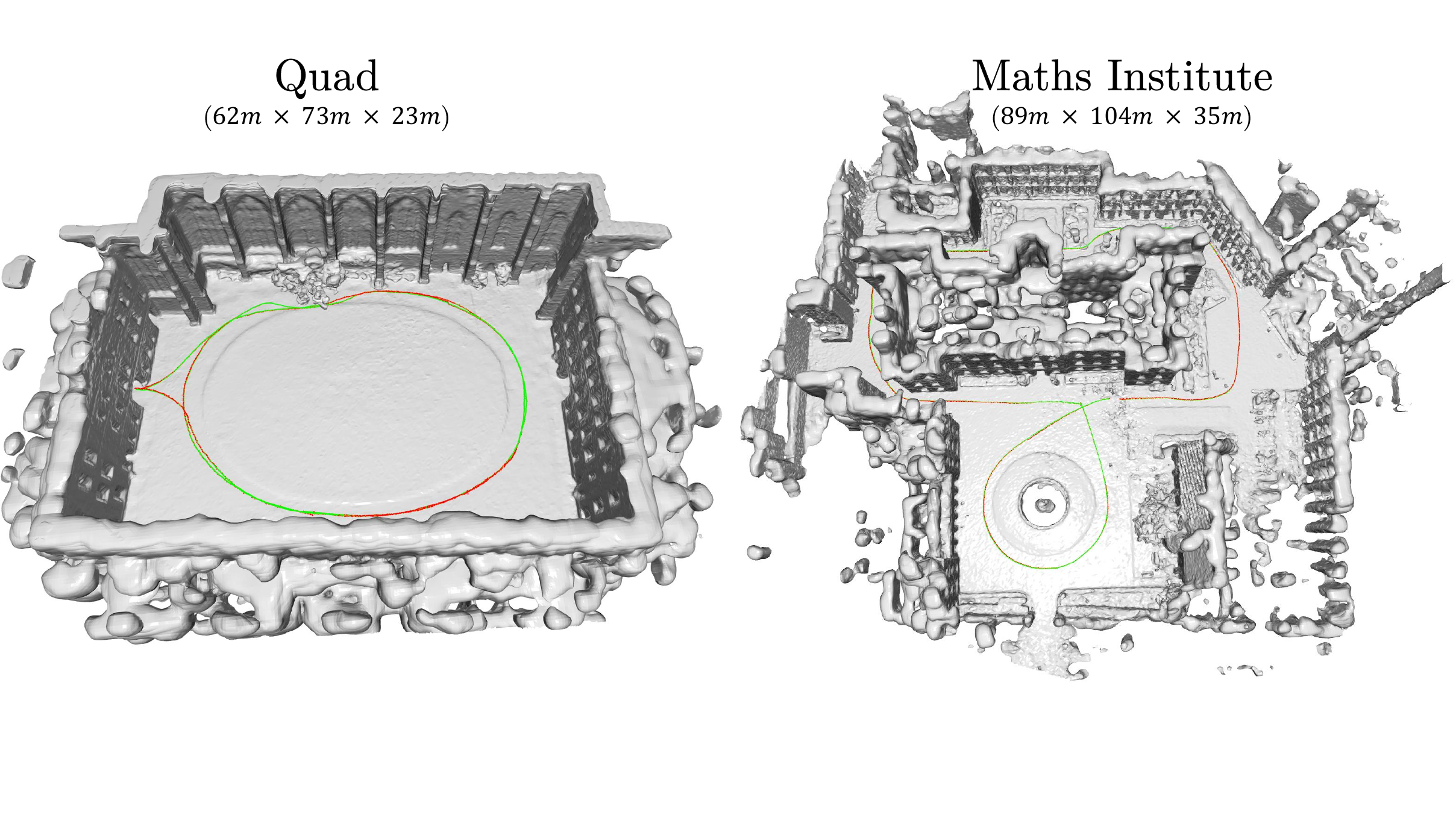}
    \caption{\edit{Qualitative evaluation on the Newer College dataset \cite{zhang2021ncdmulti}. We show the final reconstructed mesh along with estimated and reference robot trajectories in red and green, respectively.}}
    \label{fig:ncd}
\end{figure}

\edit{
\subsection{Evaluation on Newer College Dataset}
\label{sec:experiments:newer_college}

In this section, we evaluate \AlgName on the Quad-Easy and Maths-Easy sequences from the outdoor, \lidar-based Newer College Dataset \cite{zhang2021ncdmulti}.
For comparison, we run PIN-SLAM~\cite{pan2024pin}, KISS-ICP~\cite{vizzo2023kiss}, and point-to-point ICP in Open3D~\cite{zhou2018open3d} using their open-source implementations and default hyperparameters.
For KISS-ICP and point-to-point ICP, we obtain corresponding maps for evaluation by running our mapping pipeline with the pose estimates fixed.
For \AlgName, we first pre-train a decoder for each scene and then use the incremental implementation presented in \cref{sec:incremental} to process each sequence.
Point-to-point ICP is used to initialize the tracking optimization at every scan, and 
a new submap is created every 400 scans.
We evaluate two variants of our method in this experiment.
The first (\method{Odometry}) only performs incremental tracking and mapping.
The second (\method{Full}) also performs submap alignment and fusion as described in \cref{sec:global_optimization}, and its results are visualized in \cref{fig:ncd}.
\cref{tab:ncd} reports translation RMSE (m), rotation RMSE (deg), Chamfer L1 distance (cm), and F-score (\%).
In the Quad scene, our method achieves comparable performance with other state-of-the-art methods.
In the more challenging Maths Institute scene, \AlgName performs slightly worse than PIN-SLAM but still on par with KISS-ICP.
On both sequences, our results demonstrate that \AlgName substantially improves over the point-to-point ICP initialization. 
Further, the comparison between \method{Odometry} and \method{Full} validates the effectiveness of the proposed submap optimization approach to achieve global consistency.
}

\subsection{Ablation Studies}
\label{sec:experiments:ablations}

We conclude the experiments with ablation studies to provide more insight into several design choices in \AlgName.

\myParagraph{Ablation on encoder and decoder pre-training}
The first ablation study investigates the performance improvements brought by pre-training the encoder and decoder networks in \AlgName.
For this, we use the same setup as the local mapping experiments in \cref{tab:scannet_mae}.
For each scene, we report the F-score (\%) calculated with a 5 cm threshold and the training loss. The results are presented at epochs 10 and 100 to demonstrate both early-stage and later-stage training performance.

In \cref{tab:scannet_init}, we compare the following variants of \AlgName: 
\begin{itemize}
	\item \method{No-ED}: \cref{alg:hier_local_mapping} without pre-trained decoder or encoder initialization. The grid features are initialized from a normal distribution with standard deviation $10^{-4}$. Both the grid features and the decoder are optimized jointly.
        \item \method{No-E}: \cref{alg:hier_local_mapping} using only pre-trained decoder and no encoder initialization. The grid features are initialized from a normal distribution with standard deviation $10^{-4}$.
        \item \method{Full}: default \cref{alg:hier_local_mapping} using both pre-trained decoder and encoder initialization.
\end{itemize}
As shown in \cref{tab:scannet_init}, our hierarchical initialization scheme significantly speeds up training loss optimization, particularly when both the pre-trained encoder and decoder are used (see rows 1 and 3). Even with only the pre-trained decoder, optimization remains faster than training from scratch (rows 1 and 2). As expected, after sufficient training (\ie, at 100 epochs), all three methods converge in terms of training loss. Notably, using the pre-trained decoder yields better performance than training from scratch.
\edit{We attribute the slight degradation of \method{Full} compared to \method{No-E} on scene 0011 (100 epochs) to domain mismatch since our encoders are pre-trained on synthetic Replica scenes only. 
We do indeed observe Replica-like artifacts in the \method{Full} results. 
With greater realism and diversity in pre-training, we expect \method{Full} to further improve.
Overall, \cref{tab:scannet_init} still shows clear benefits of pre-trained encoders especially at early stages (10 epochs).}


\begin{table}[t]
\centering
\renewcommand{\arraystretch}{1.6}
\caption{Ablation study on the effect of pre-trained encoder and decoder networks. For each scene, we report F-score (\%) and the training loss at epochs 10 and 100. Results are averaged over five runs, and the best outcomes are highlighted in \textbf{bold}.}
\label{tab:scannet_init}
\begingroup
\resizebox{\linewidth}{!}{%
\begin{tabular}{|l|rr|rr|rr|rr|}
\hline
\multicolumn{1}{|c|}{Scene} & \multicolumn{2}{c|}{0011 \vspace{-0.25em}} & \multicolumn{2}{c|}{0011 \vspace{-0.25em}} & \multicolumn{2}{c|}{0207 \vspace{-0.25em}} & \multicolumn{2}{c|}{0207 \vspace{-0.25em}} \\ 
\multicolumn{1}{|c|}{} & \multicolumn{2}{c|}{\fontsize{6pt}{7pt}\selectfont (10 epoch) \vspace{-0.15em}} & \multicolumn{2}{c|}{\fontsize{6pt}{7pt}\selectfont (100 epoch) \vspace{-0.15em}} & \multicolumn{2}{c|}{\fontsize{6pt}{7pt}\selectfont (10 epoch) \vspace{-0.15em}} & \multicolumn{2}{c|}{\fontsize{6pt}{7pt}\selectfont (100 epoch) \vspace{-0.15em}}\\
\multicolumn{1}{|c|}{Method} 
& \multicolumn{1}{c}{F-score$\uparrow$} & \multicolumn{1}{c|}{Loss$\downarrow$}& \multicolumn{1}{c}{F-score$\uparrow$} & \multicolumn{1}{c|}{Loss$\downarrow$}
& \multicolumn{1}{c}{F-score$\uparrow$}& \multicolumn{1}{c|}{Loss$\downarrow$}& \multicolumn{1}{c}{F-score$\uparrow$} & \multicolumn{1}{c|}{Loss$\downarrow$}\\
\hline
{\method{No-ED}} 
& - & 0.206 & 59.3 & 0.061
& - & 0.196 & 58.5 & 0.056 \\
{\method{No-E}} 
& 52.8 & 0.139 & \textbf{69.5} & \textbf{0.056}
& 53.8 & 0.130 & \textbf{67.5} & \textbf{0.052} \\
{\method{Full}} 
 & \textbf{59.4} & \textbf{0.098} & 66.3 & 0.057 
 & \textbf{57.8} & \textbf{0.089} & 66.3 & 0.052 \\
\hline
\end{tabular}%
}
\endgroup
\end{table}

\myParagraph{Ablation on hierarchical alignment}
To investigate the effects of each alignment stage in \cref{alg:hier_alignment}, we conduct ablation experiments on ScanNet scenes 0011 and 0207, each containing four submaps. We use the same setup as in the submap alignment experiments. In \cref{tab:scannet_hier}, we compare the following variants of \AlgName:

\begin{itemize}
	\item \method{Coarse only}: \cref{alg:hier_alignment} with only coarse grid ($l=1$) feature-based alignment.
	\item \method{Coarse+Fine}:  \cref{alg:hier_alignment} with both coarse grid ($l=1$) and fine grid ($l=2$) feature-based alignment.
	\item \method{Full}: default \cref{alg:hier_alignment} with both coarse grid ($l=1$) and fine grid ($l=2$) feature-based alignment, and a final alignment stage using predicted SDF values.
\end{itemize}

In \cref{tab:scannet_hier}, one can see that the full hierarchical approach (coarse, fine, and SDF) achieves the smallest rotation and translation errors, whereas using only the coarse alignment yields the largest errors. As expected, the full approach requires more computation time as the last stage based on SDF alignment requires using the decoder to predict SDF values. We do not report the F-score for \method{No-ED} at epoch 10 because it fails to produce a mesh due to insufficient training.

\begin{table}[t]
\centering
\renewcommand{\arraystretch}{1.6}
\caption{Ablation study on hierarchical alignment on ScanNet \cite{dai2017scannet} from large initial errors of $10$~deg and $0.25$~m.
We report the alignment solver time (sec) and the final submap rotation error (deg) and translation error (m) compared to ground truth. Results are averaged over 10 trials. Best results are highlighted in \textbf{bold}.}
\label{tab:scannet_hier}
\begingroup
\resizebox{\linewidth}{!}{%
\begin{tabular}{|l|rrr|rrr|}
\hline
\multicolumn{1}{|c|}{Scene} & \multicolumn{3}{c|}{0011 (4 submaps)} & \multicolumn{3}{c|}{0207 (2 submaps)} \\
\multicolumn{1}{|c|}{Method} 
& \multicolumn{1}{c}{Time$\downarrow$} & \multicolumn{1}{c}{Rot err$\downarrow$} & \multicolumn{1}{c|}{Tran err$\downarrow$}
& \multicolumn{1}{c}{Time$\downarrow$} & \multicolumn{1}{c}{Rot err$\downarrow$} & \multicolumn{1}{c|}{Tran err$\downarrow$} \\
\hline
\method{Coarse only}
 & \textbf{1.20} & 4.73 & 0.16
 & \textbf{0.25} & 4.86 & 0.07 \\
\method{Coarse+Fine}
 & 2.54 & 3.68 & 0.12
 & 0.50 & 1.89 & 0.07 \\
\method{Full}
 & 10.13 & \textbf{2.71} & \textbf{0.09}
 & 1.68 & \textbf{1.11} & \textbf{0.05} \\
\hline
\end{tabular}%
}
\endgroup
\end{table}

\myParagraph{Ablation on metric function}
We study how different metric functions used in the feature-based alignment stage (\ie, $\dist$ in \eqref{eq:sdf_pairwise_cost}) affect alignment accuracy.  Specifically, we compare negative cosine similarity, L1 distance, and L2 distance under three progressively increasing initial rotation and translation errors in scene 0011. Our results in \cref{fig:ablation_metric} show that L2 distance generally yields the most accurate submap alignment, while negative cosine similarity tends to produce the largest misalignment.

\section{Limitations}
\label{sec:limitations}

While \AlgName demonstrates very promising results in our evaluation, it still has several limitations. 
One direction for improvement is motivated by the use of dense feature grids. 
While dense grids offer a range of advantages including fast interpolation, better free-space modeling, and the ease of designing encoder networks, their memory cost may present a challenge when scaling the method to very large environments.
To address this, combining \AlgName's hierarchical optimization approach with sparse data structures or tensor decomposition techniques is an exciting direction for future work. 
In addition, while we have only considered SDF reconstruction in this work, we expect the underlying approach in \AlgName to be applicable for modeling additional scene properties such as radiance, semantics, and uncertainty.
Lastly, in this work, we primarily demonstrated \AlgName in an offline batch optimization setting, 
where a static scene is reconstructed given robot odometry and observations.
Extending to online and dynamic settings is an important direction for future research.

\begin{figure}[t]
    \centering
    \begin{subfigure}[t]{0.49\linewidth}
        \centering
        \includegraphics[trim=0 10 0 0, clip,width=\linewidth]{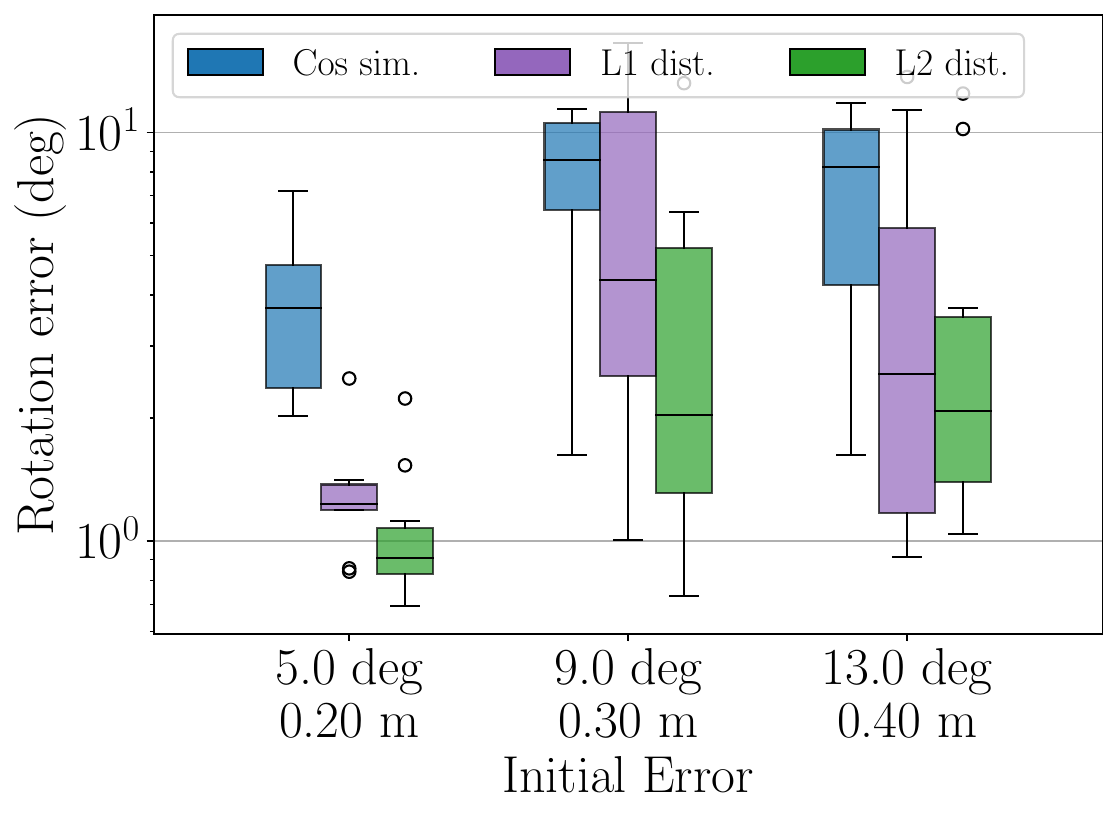}
        \caption{Rotation error (deg)}
        \label{fig:ablation_metric:rot}
    \end{subfigure}
    \hfill
    \begin{subfigure}[t]{0.49\linewidth}
        \centering
        \includegraphics[trim=0 10 0 0, clip,width=\linewidth]{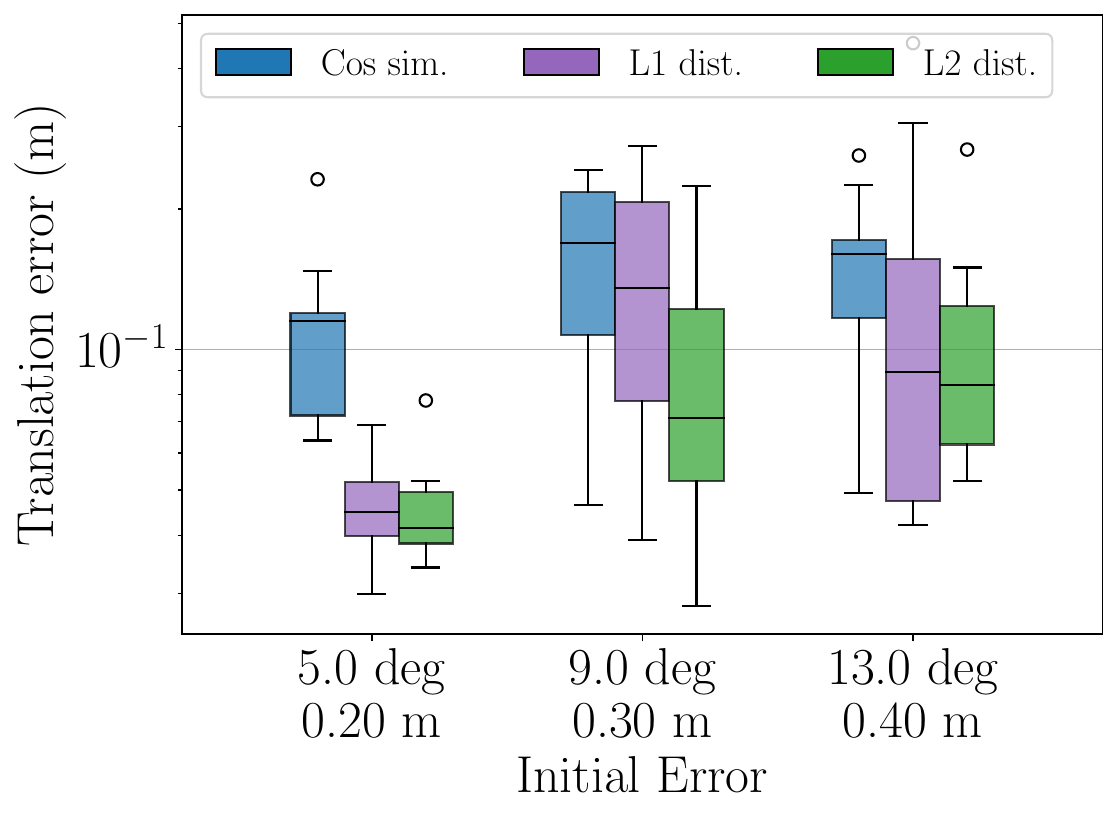}
        \caption{Translation error (m)}
        \label{fig:ablation_metric:tran}
    \end{subfigure}
    \caption{
Ablation study on hierarchical alignment on ScanNet \cite{dai2017scannet}. We compare negative cosine similarity, L1 distance and L2 distance under three different initial rotation and translation errors.}
    \label{fig:ablation_metric}
\end{figure}

\section{Conclusion}
\label{sec:conclusion}

We presented \AlgName, a hierarchical back-end for neural implicit SLAM based on multiresolution submap optimization.
\AlgName performs local SLAM by representing each submap as a multiresolution feature grid.
To achieve fast optimization and enable generalization, \AlgName makes use of pre-trained hierarchical encoder and decoder networks, where the encoders serve to initialize the submap features from observations, and the decoder serves to decode optimized features to predict the scene geometry.
Furthermore, \AlgName presents a hierarchical method that sequentially uses implicit features and SDF predictions to align and fuse submaps, achieving globally consistent mapping.
\edit{Evaluation on the real-world ScanNet, \Fastcamo, and Newer College datasets demonstrated \AlgName's strong performance and superior efficiency.}

\edit{
\section*{Acknowledgments}
We gratefully acknowledge support from ARL DCIST CRA W911NF-17-2-0181, ONR N00014-23-1-2353, and NSF CCF-2112665 (TILOS).
}

\bibliographystyle{ieeetr}
\bibliography{references}

\begin{appendices}
\section{Implementation and Training Details}
\label{sec:grid_details}

In our experiments, each submap is a multiresolution feature grid that has two levels with spatial resolutions 0.5~m and 0.1~m, respectively.
The feature dimension at each level is set to $d=4$.
The decoder network $D_\theta$ is a single-layer MLP with hidden dimension 64. 
For the encoder network $E_{\phi_l}$ at each level $l$, we first use a small 3D CNN to process the input voxelized features.
Each CNN has 2 hidden layers, a kernel size of 3, and the number of feature channels doubles after every layer.
The CNN outputs at all 3D vertices are processed by a shared two-layer MLP with hidden dimension 16 to predict the target feature grid $F_l$ (see \cref{fig:encoder}).
We use ReLU as the nonlinear activation for all networks.

To train the decoder network $D_\theta$ offline, we compile an offline training dataset using six scenes from Replica \cite{replica19arxiv}. 
Within each scene $s \in S$, we randomly simulate 128 camera views with ground truth pose information. 
During training, all scenes share a single set of decoder parameters $\theta$, 
and each scene $s$ has its own dedicated grid features $F^{s}$.
We perform joint optimization using a loss similar to \cref{prob:local_mapping},
except we use ground truth camera poses and disable pose optimization, 

\begin{equation}
\label{eq:decoder_pretrain}
\underset{ \{F^s\}_s , \theta}{\min}
	 \sum_{s \in S}\sum_{k=1}^n \sum_{j=1}^{m_k} 
        c_j\bigl(
            h(\underline{T}^s_k x^k_j; F^s, \theta)
        \bigr),
\end{equation}
where $\underline{T}^s_k$ denote the ground truth camera poses. 
We use Adam \cite{kingma2014adam} with a learning rate of $10^{-3}$ and train for a total of 1200 epochs. 
During training, we employ a coarse-to-fine strategy where all fine level features are activated after 200 epochs. 
After training completes, the grid features $F^s$ are discarded and only decoder parameters $\theta$ is saved.

The offline training of encoder networks follows a similar setup and uses the same Replica scenes.
The design of the training loss is presented in \cref{sec:local_optimization} in the main paper.
We sequentially train the encoder networks from coarse to fine levels.
At level $l$, we use the trained encoders from previous levels to compute $F^s_{1:l-1}$, which are needed to evaluate the training loss in \eqref{eq:encoder_training_loss}.
To account for noisy pose estimates at test time, we simulate pose errors  with $1$~deg and $1$~cm  rotation and translation standard deviations. 
Each encoder is trained using Adam \cite{kingma2014adam} with a learning rate of $10^{-3}$ for 1000 epochs.

\section{Proofs}
\label{sec:proof}

\begin{proof}[Proof of \cref{lem:linear_case}]
Without loss of generality, we assume the features $F_l$ is represented as a vector $F_l \in \Real^{|F_l|}$.
In the following, all points are expressed in the coordinate frame of the submap.
First, consider a single observed point $x_j \in \Real^3$ and its label $y_j \in \Real$.
At each level $l$, the output feature 
$f_l(x_j)$ in \cref{def:grid} is a linear function of the features $F_l$ at this level.
This means that there exists a matrix $K_l(x_j) \in \Real^{d \times |F_l|}$ such that
$f_l(x_j) = K_l(x_j) F_l$.
When the decoder $D_\theta$ is a linear function, the final output from the multiresolution feature grid is a weighted sum of features from all levels, \ie,
\begin{equation}
\begin{aligned}
h(x_j;F)
\!=\!
D_\theta \bigl(
	\bigoplus_{l \in [L]}
	f_l(x_j)
\bigr) 
\!= \!
\sum_{l=1}^L \theta_l^\top f_l(x_j) 
\!= \!
\sum_{l=1}^L \theta_l^\top K_l(x_j) F_l.
\end{aligned}
\end{equation}
In the following, we use $l$ to specifically refer to the target level that we seek to initialize in \eqref{eq:level_local_mapping}.
The measurement residual is,
\begin{equation}
h(x_j;F) - y_j = 
\underbrace{\sum_{l'=1}^{l-1} \theta_{l'}^\top K_{l'}(x_j) F_{l'} - y_j}_{r_{1:l-1}(x_j)}
+ \theta_{l}^\top K_{l}(x_j) F_{l},
\end{equation}
where we have used the fact that features at levels greater than $l$ are zero.
We can concatenate the above equation over all observations $x_1, \hdots, x_N$ as follows,
\begin{equation}
h(x;F) - y = 
\underbrace{
\begin{bmatrix}
r_{1:l-1}(x_1) \\
\vdots \\
r_{1:l-1}(x_N)
\end{bmatrix} 
}_{r_{1:l-1}(x)}
+ 
\underbrace{
\begin{bmatrix}
\theta_{l}^\top K_{l}(x_1) \\
\vdots \\
\theta_{l}^\top K_{l}(x_N)
\end{bmatrix}
}_{J}
F_l.
\end{equation} 
Note that the matrix $J$ is exactly the Jacobian of our model $h(x; F)$ with respect to the level-$l$ features $F_l$. 
Substituting the above equation into \eqref{eq:level_local_mapping} shows that the problem is equivalent to the following linear least squares,
\begin{equation}
\underset{{F_l}}{\min} \norm{h(x; F) - y}^2_2 = \norm{r_{1:l-1}(x) + J F_l}^2_2.
\end{equation}
Thus, the solution is obtained by solving its normal equations, yielding the final expression in 
\eqref{eq:level_local_mapping_closed_form}.
\end{proof}
\section{Additional Results}
\label{sec:additional_experiments}

In \cref{fig:scannet_additional}, we show qualitative SDF visualizations on additional scenes from ScanNet \cite{dai2017scannet} under noisy poses.
For each method, a horizontal slice of the estimated SDF is shown where red and blue indicate positive and negative values.
These figures further supports the conclusions drawn in the paper, and the reader is referred to \cref{sec:experiments} for the discussion.

\begin{figure*}[t]
    \begin{center}
        \begin{subfigure}[t]{0.24\linewidth}
            \centering
            \includegraphics[width=\linewidth]{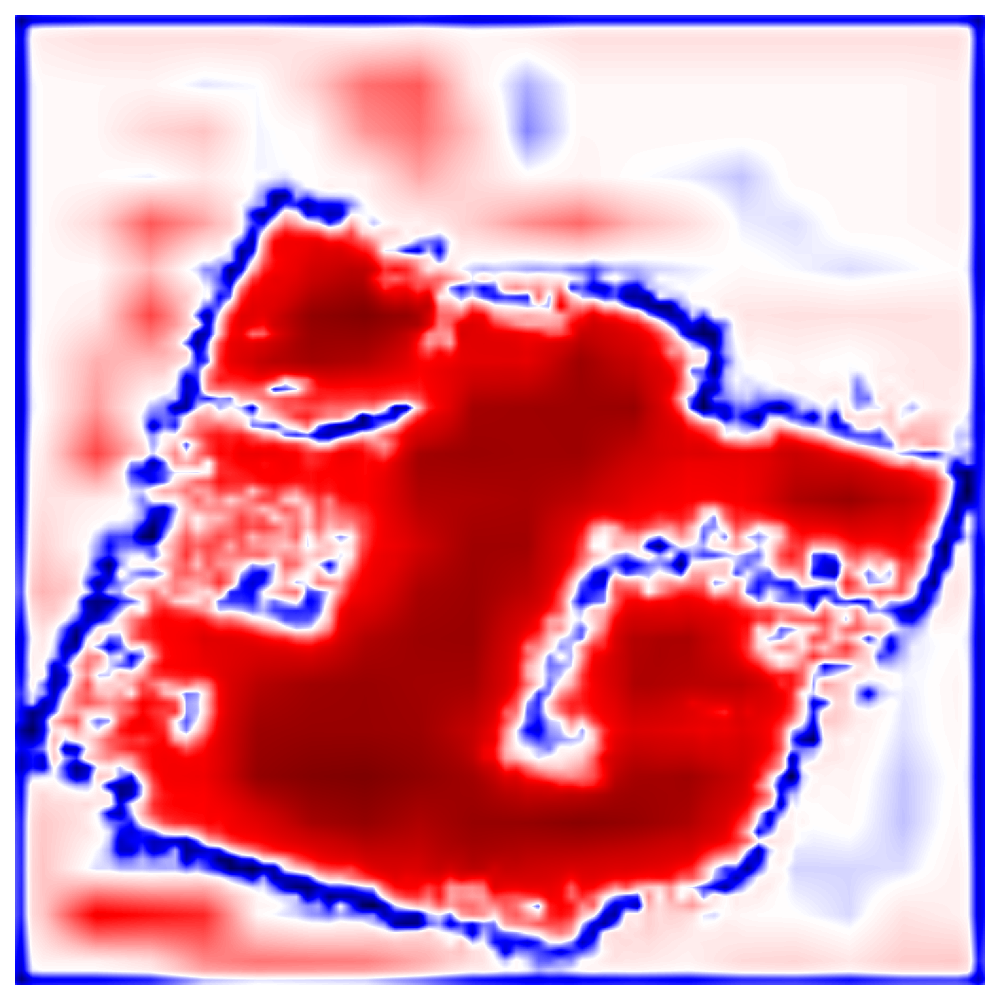}
            \caption{\AlgName (initialized)}
            \label{fig:scannet0000:init}
        \end{subfigure}
        \begin{subfigure}[t]{0.24\linewidth}
            \centering
            \includegraphics[width=\linewidth]{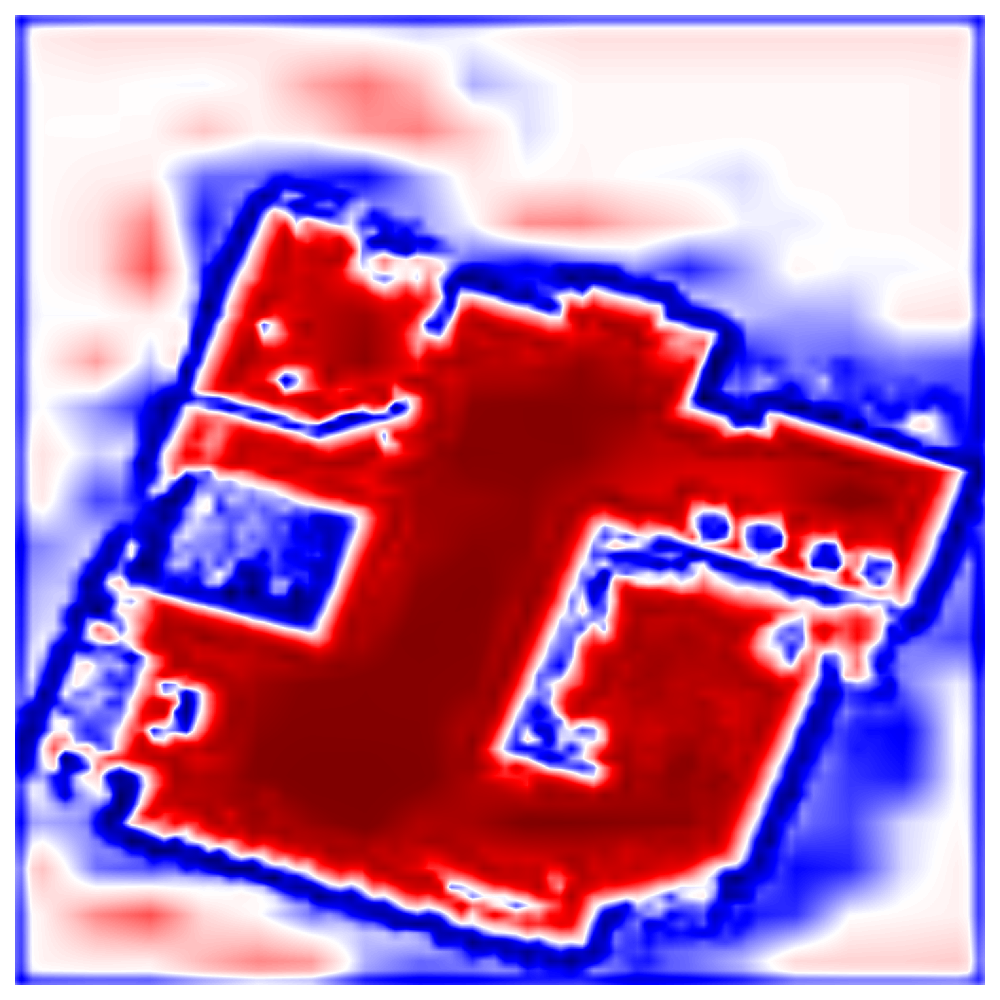}
            \caption{\AlgName (optimized)}
            \label{fig:scannet0000:opt}
        \end{subfigure} 
        \begin{subfigure}[t]{0.24\linewidth}
            \centering
            \includegraphics[width=\linewidth]{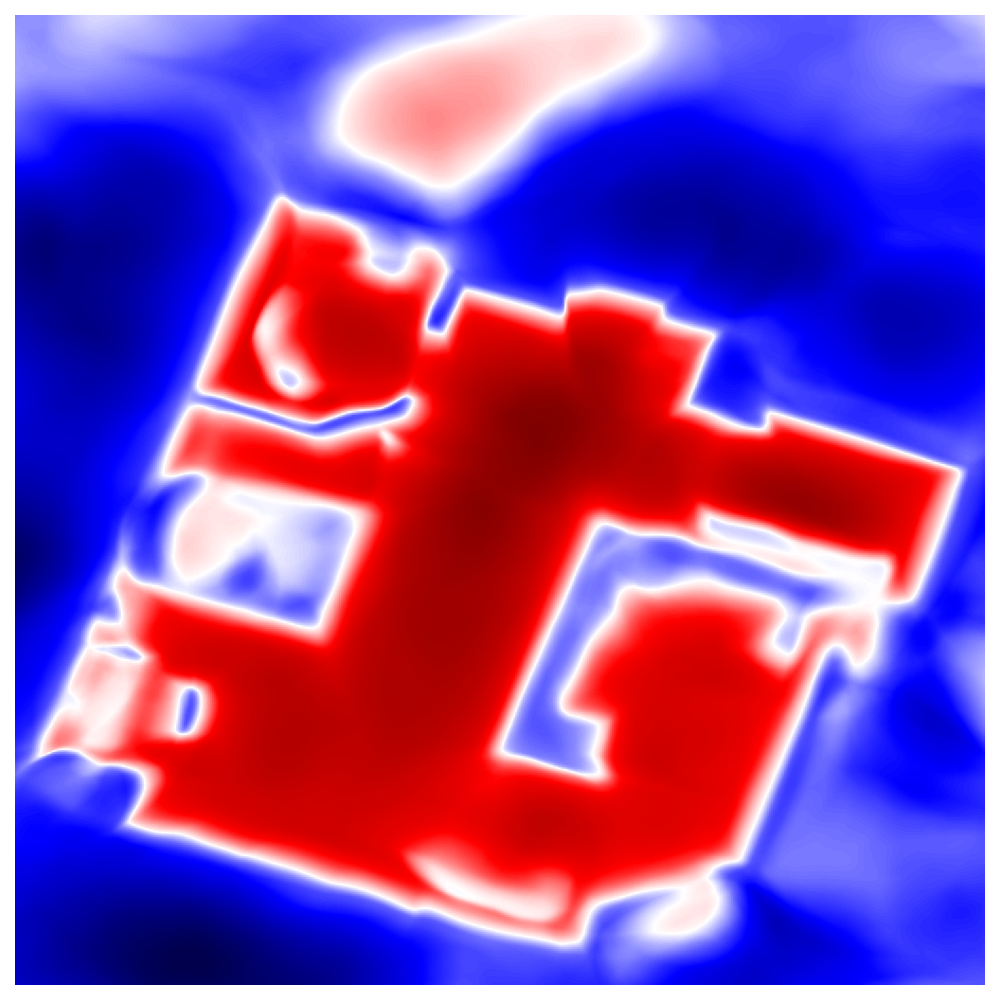}
            \caption{\iSDF (GT pose) \cite{ortiz2022isdf}}
            \label{fig:scannet0000:isdf}
        \end{subfigure}
        \begin{subfigure}[t]{0.24\linewidth}
            \centering
            \includegraphics[width=\linewidth]{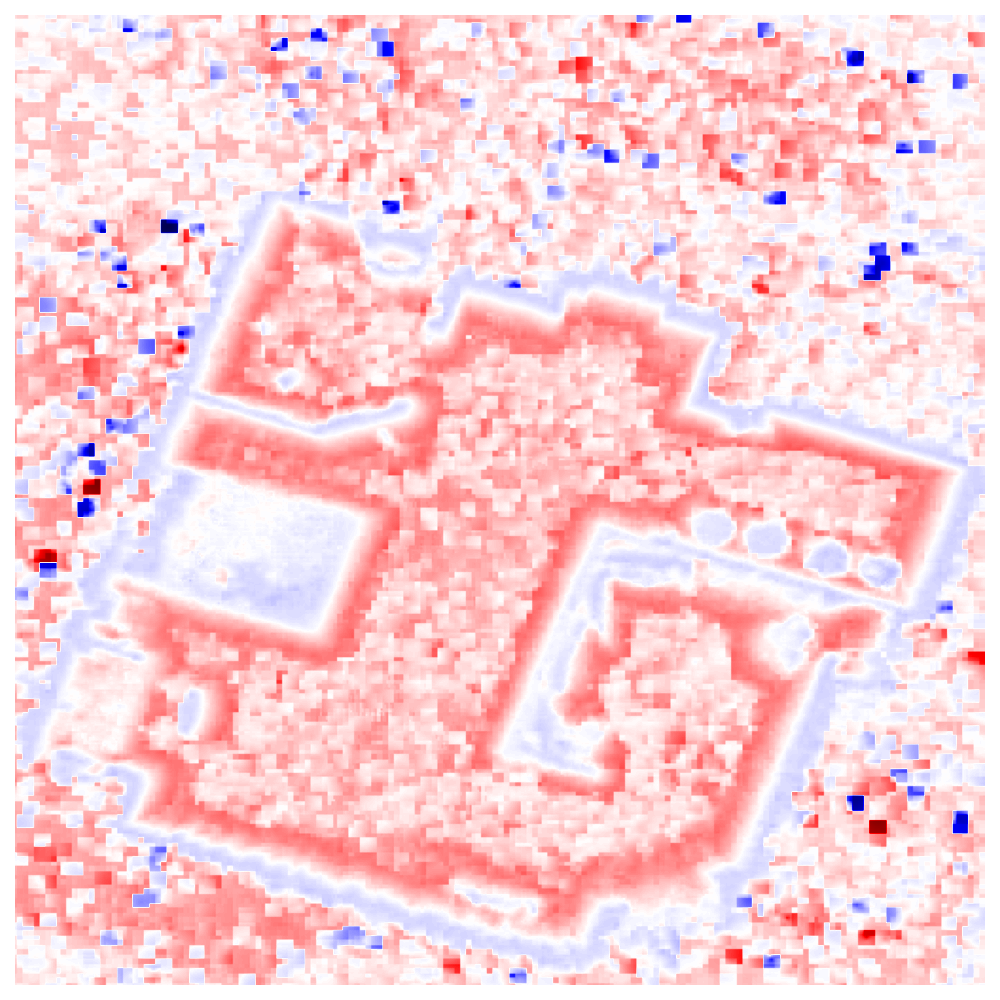}
            \caption{\Point (GT pose) \cite{pan2024pin}}
            \label{fig:scannet0000:pin}
        \end{subfigure}
        \\
        \begin{subfigure}[t]{0.24\linewidth}
            \centering
            \includegraphics[width=\linewidth]{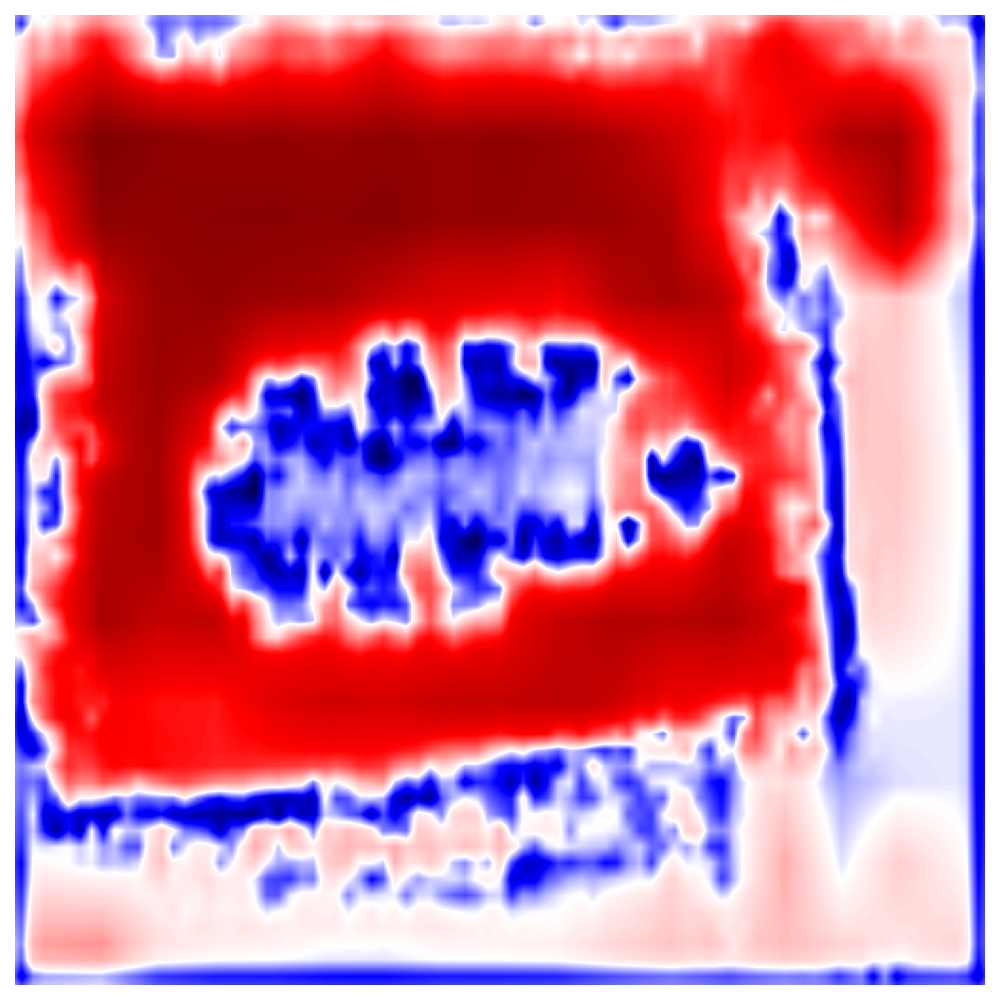}
            \caption{\AlgName (initialized)}
        \end{subfigure}
        \begin{subfigure}[t]{0.24\linewidth}
            \centering
            \includegraphics[width=\linewidth]{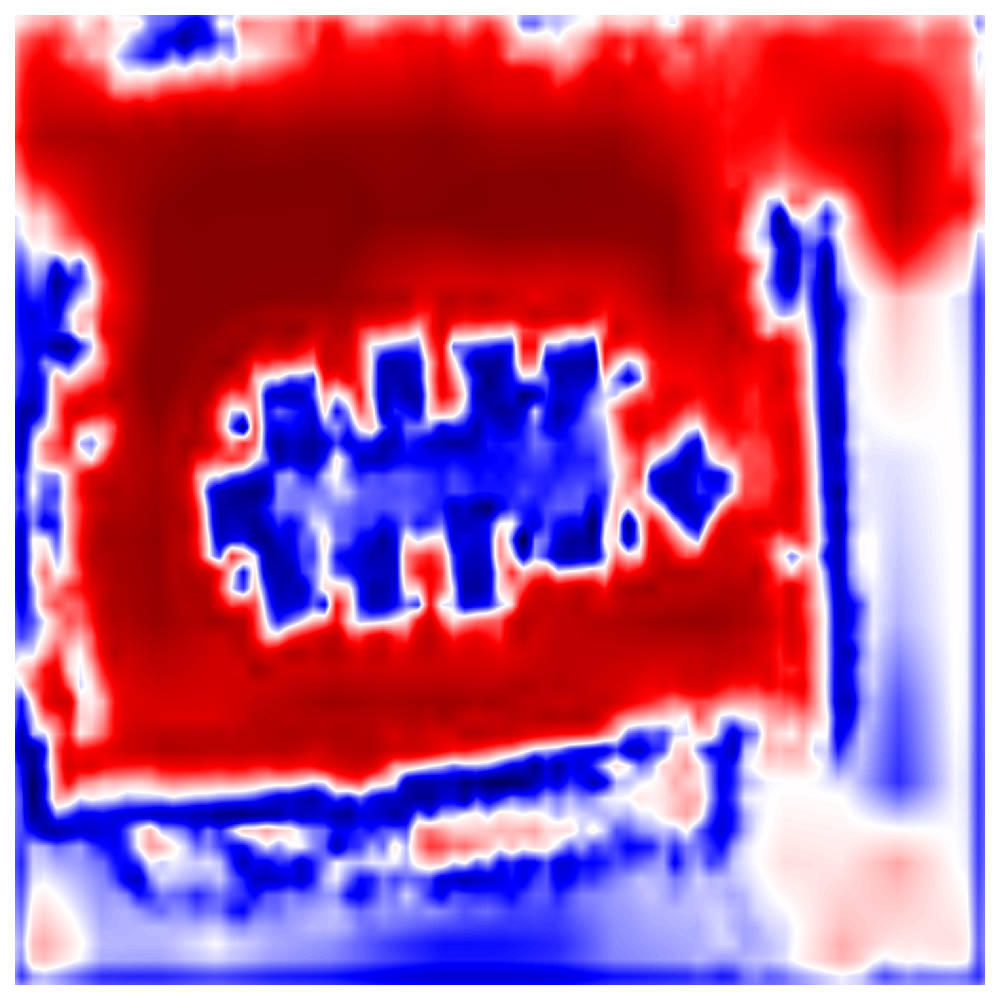}
            \caption{\AlgName (optimized) }
        \end{subfigure} 
        \begin{subfigure}[t]{0.24\linewidth}
            \centering
            \includegraphics[width=\linewidth]{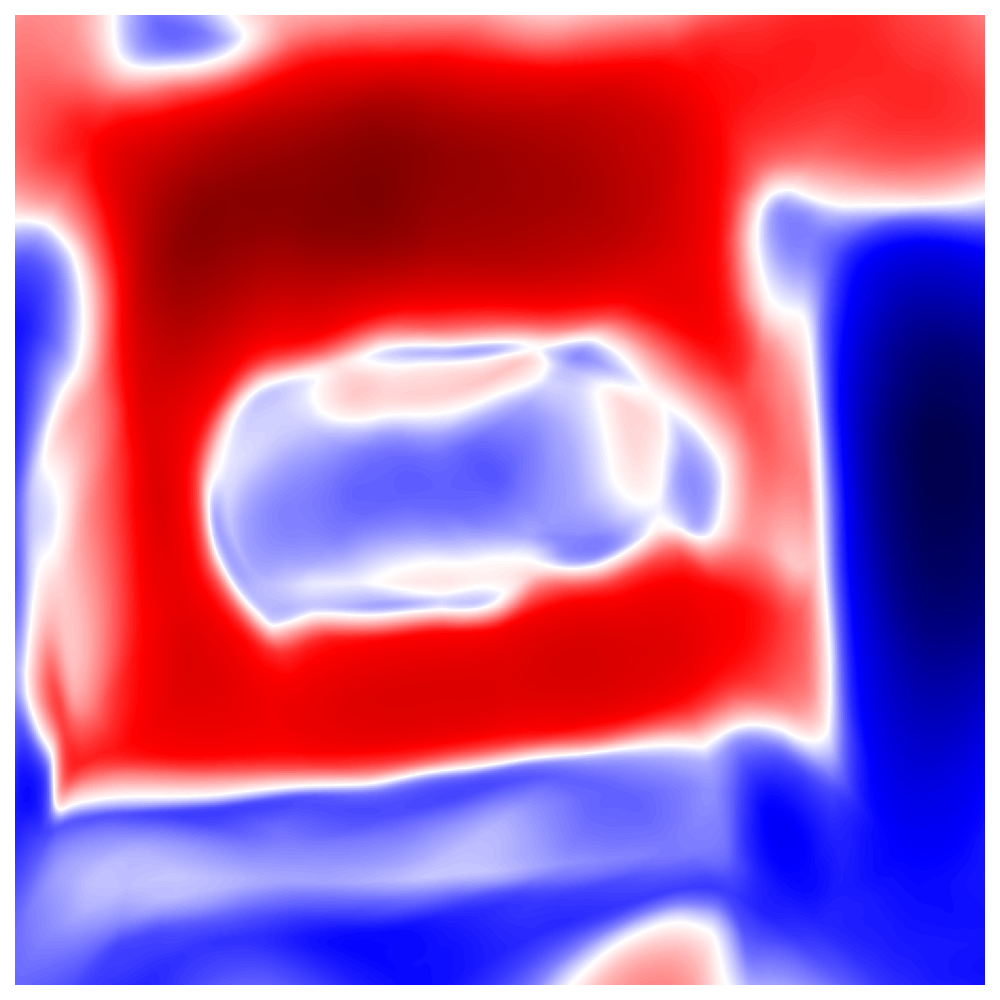}
            \caption{\iSDF (GT pose) \cite{ortiz2022isdf} }
        \end{subfigure}
        \begin{subfigure}[t]{0.24\linewidth}
            \centering
            \includegraphics[width=\linewidth]{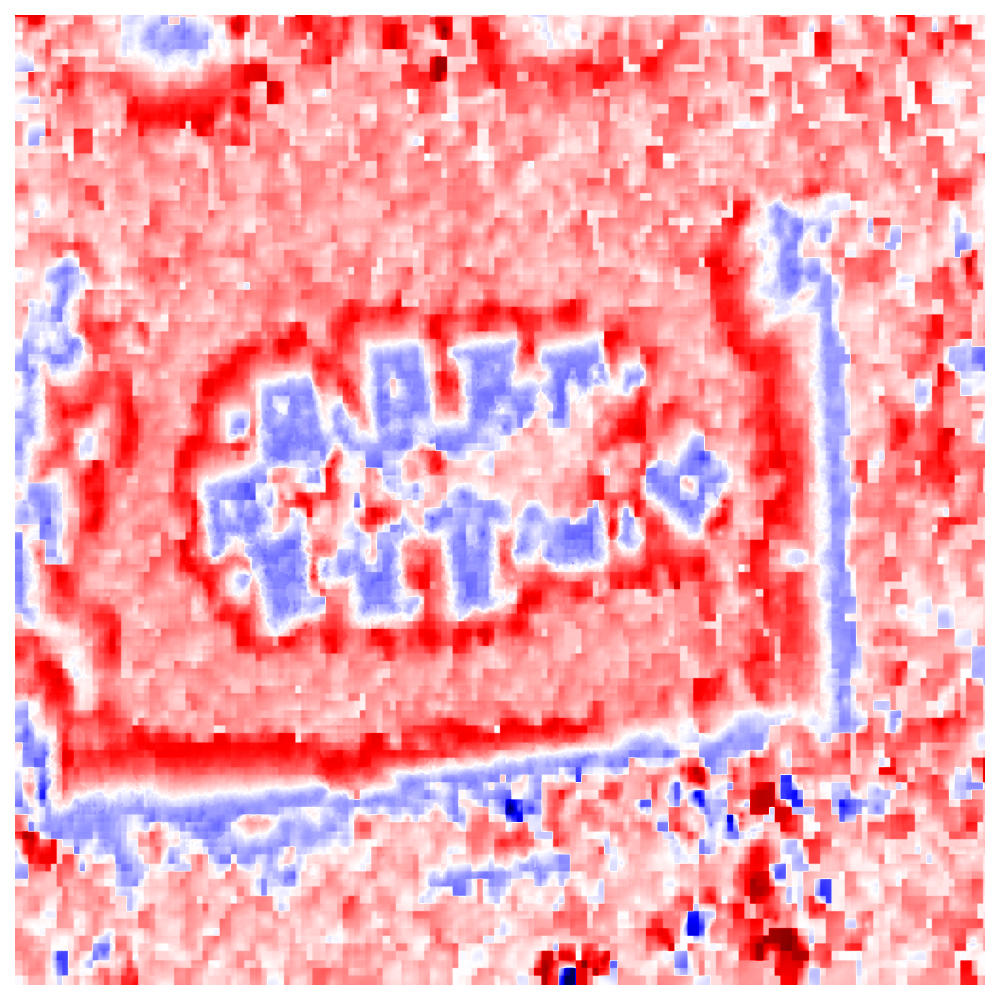}
            \caption{\Point (GT pose) \cite{pan2024pin} }
        \end{subfigure}
        \\
        \begin{subfigure}[t]{0.24\linewidth}
            \centering
            \includegraphics[width=\linewidth]{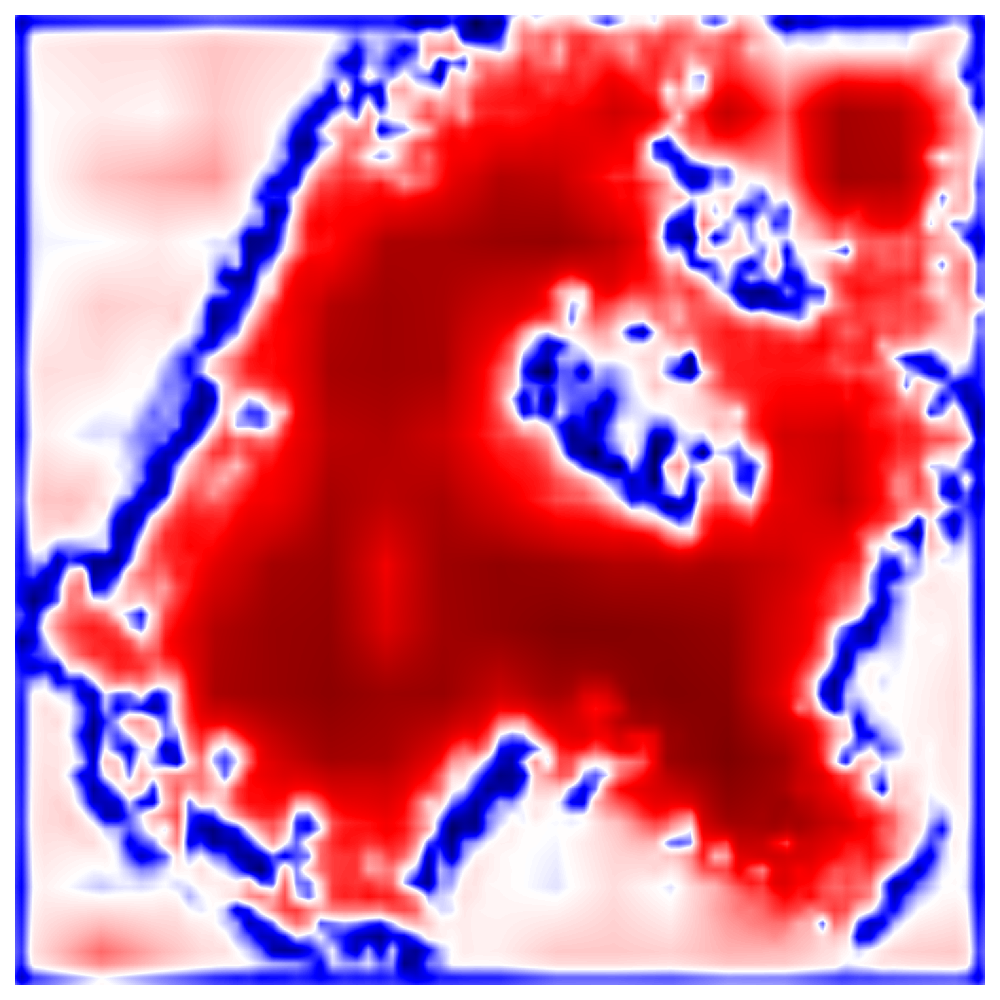}
            \caption{\AlgName (initialized)}
        \end{subfigure}
        \begin{subfigure}[t]{0.24\linewidth}
            \centering
            \includegraphics[width=\linewidth]{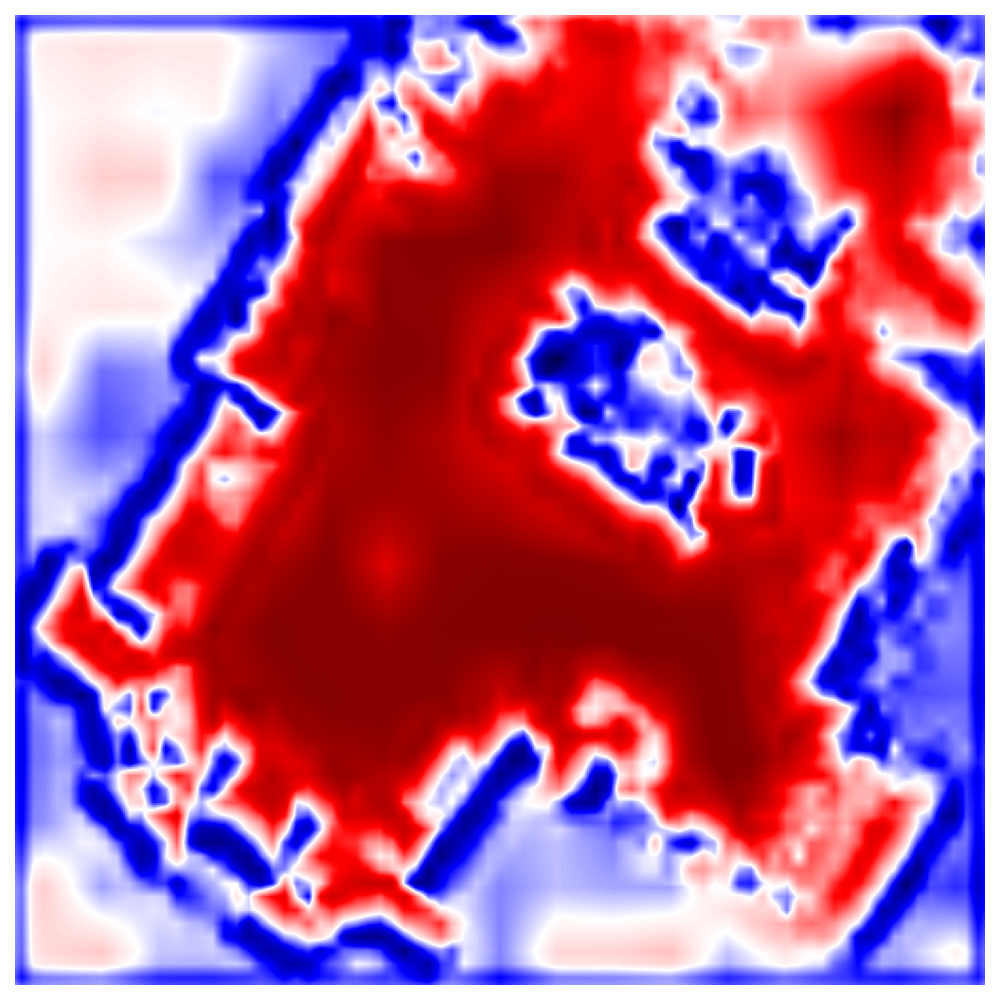}
            \caption{\AlgName (optimized) }
        \end{subfigure} 
        \begin{subfigure}[t]{0.24\linewidth}
            \centering
            \includegraphics[width=\linewidth]{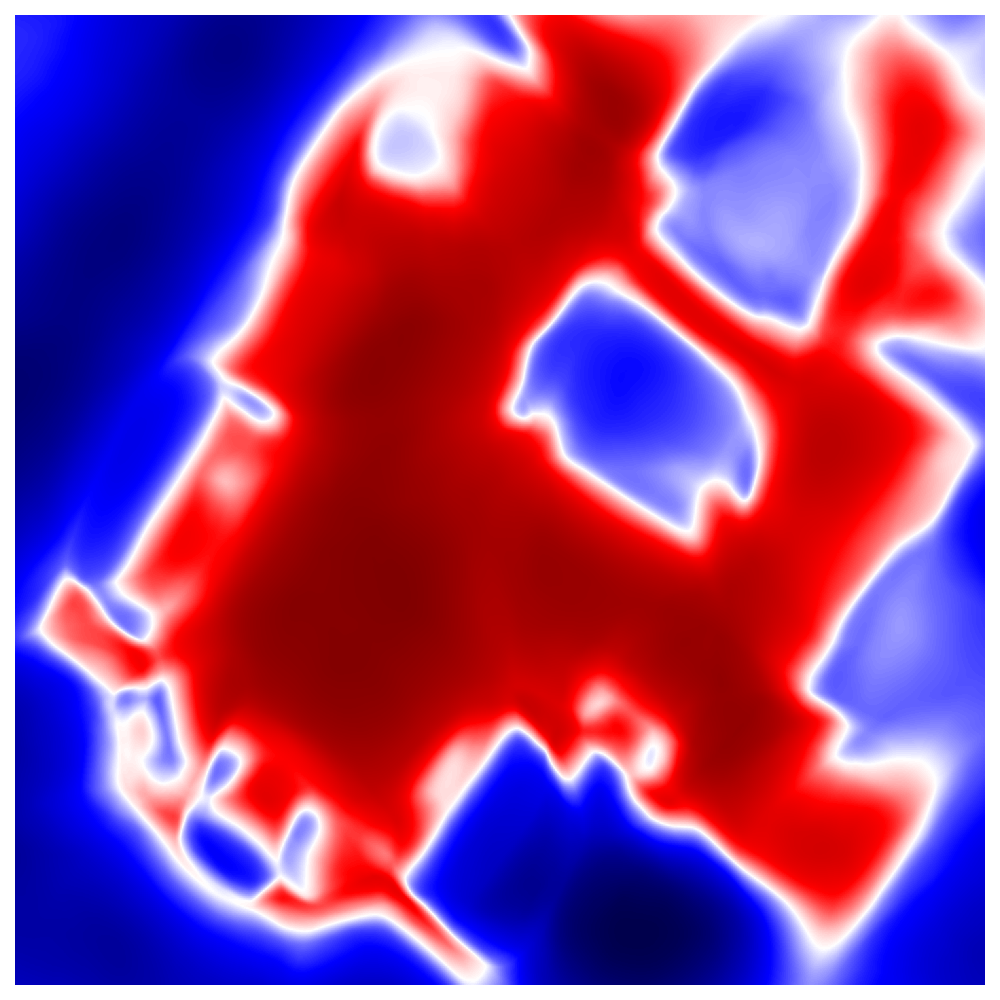}
            \caption{\iSDF (GT pose) \cite{ortiz2022isdf} }
        \end{subfigure}
        \begin{subfigure}[t]{0.24\linewidth}
            \centering
            \includegraphics[width=\linewidth]{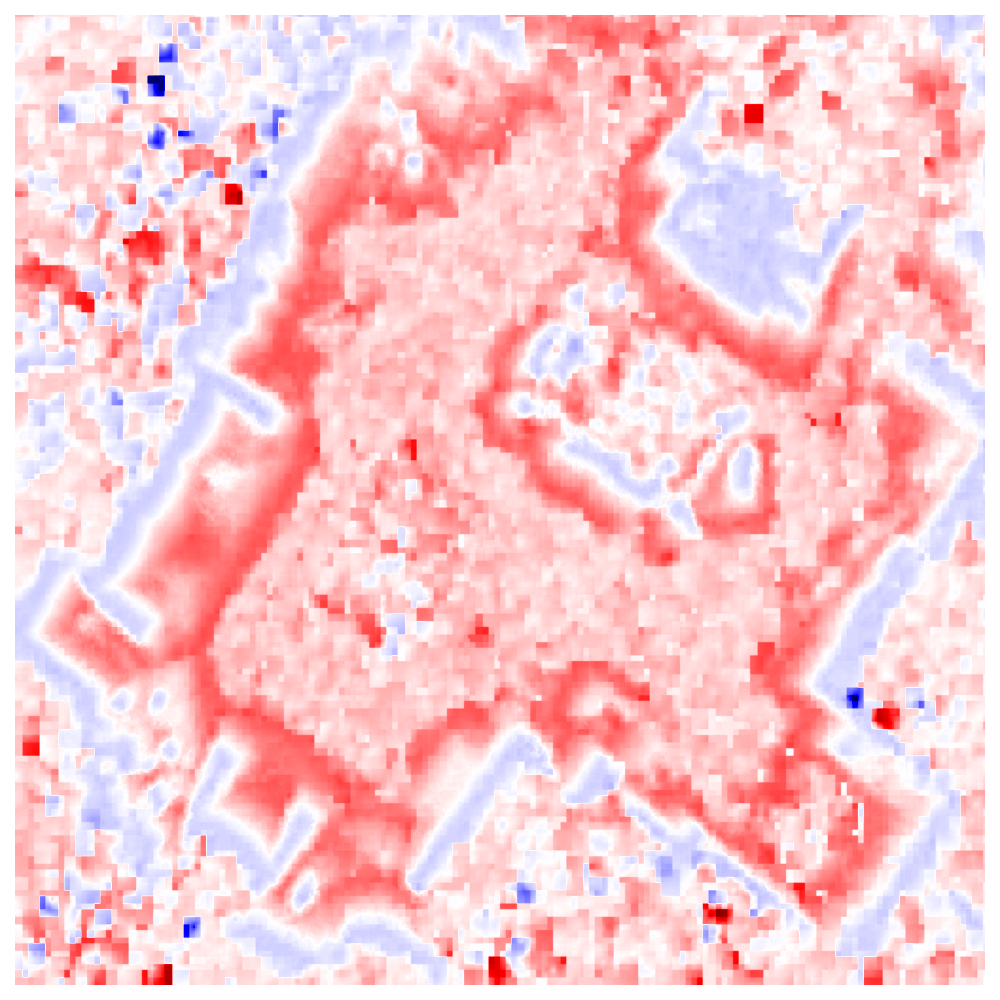}
            \caption{\Point (GT pose) \cite{pan2024pin} }
        \end{subfigure}
    \end{center}
    \caption{Qualitative comparison on additional ScanNet scenes 0000, 0011, and 0024, each shown in a row. 
    For each method, a horizontal slice of the estimated SDF is shown where red and blue indicate positive and negative values.}
    \label{fig:scannet_additional}
\end{figure*}


\begin{table}[H]
    \centering
    \renewcommand{\arraystretch}{1.6}
    \caption{Number of parameters and GPU memory usage of \AlgName.}
    \label{tab:memory_measure}
    \begingroup
    \resizebox{\linewidth}{!}{%
    \begin{tabular}{|l|rr|rr|rr|}
        \hline
        \multicolumn{1}{|c|}{Method} 
        & \multicolumn{2}{c|}{ScanNet (0011)} 
        & \multicolumn{2}{c|}{FastCamo (Stairs-I)} 
        & \multicolumn{2}{c|}{FastCamo (Floors-I)} \\
        
        \multicolumn{1}{|c|}{} 
        & \multicolumn{1}{c}{Params} 
        & \multicolumn{1}{c|}{GPU mem} 
        & \multicolumn{1}{c}{Params} 
        & \multicolumn{1}{c|}{GPU mem} 
        & \multicolumn{1}{c}{Params} 
        & \multicolumn{1}{c|}{GPU mem} \\
        \hline
        \method{Coarse only}
        & 0.14M & 2.11GB
        & 0.25M & 1.60GB
        & 0.26M & 2.64GB \\
        
        \method{Full}
        & 0.70M & 2.18GB
        & 9.08M & 1.67GB
        & 8.48M & 2.71GB \\
        \hline
    \end{tabular}%
    }
    \endgroup
\end{table}
We evaluate \AlgName’s memory requirements by measuring both the total number of parameters (covering the grid model and the decoder) and the GPU memory usage. The GPU memory usage is measured across the entire environment using PyTorch’s {\tt cuda.max\_memory\_allocated()} function. 
We test three scenes in \cref{tab:memory_measure}: ScanNet \cite{dai2017scannet} scene 011 and \Fastcamo \cite{tang2023mips} Stairs-I and Floors-I. 
The sizes of these environments, measured as the dimensions of their axis-aligned bounding boxes, are reported below,
\begin{itemize} 
\item ScanNet 0011: $6.0$~m $\times$ $8.2$~m $\times$ $2.7$~m
\item \Fastcamo Stairs-I: $8.9$~m $\times$ $10.7$~m $\times$ $15.4$~m
\item \Fastcamo Floors-I: $26.0$~m $\times$ $16.7$~m $\times$ $7.5$~m
\end{itemize}
We consider two configurations: one using only the coarse grid (\method{Coarse only}) and another using both the coarse and fine grids (\method{Full}). 
In all scenes, the decoder itself consistently contains about 0.12M parameters.

\end{appendices}

\end{document}